\documentclass[10pt, a4paper]{article}
\usepackage[a4paper, margin=2.0cm]{geometry}
\usepackage[utf8]{inputenc}
\usepackage[T1]{fontenc}
\usepackage[english]{babel}
\usepackage{lmodern}
\usepackage{tikz}
\usetikzlibrary{arrows.meta, positioning, calc, backgrounds, fit, shapes.geometric}
\tikzset{needle/.style={circle, draw=blue!80!black, fill=blue!18, thick, minimum size=6.2mm, inner sep=0pt, text=blue!90!black, font=\sffamily\scriptsize\bfseries}}
\usepackage{rotating}

\tikzset{keyleaf/.style={needle, draw=blue!70!black, fill=blue!10, text=blue!80!black}}

\usepackage{amsmath, amssymb, amsthm}
\usepackage{booktabs}
\usepackage{array}
\usepackage[x11names]{xcolor}
\usepackage{hyperref}

\usepackage{booktabs}
\usepackage{pifont}
\newcommand{\cmark}{\ding{51}}
\newcommand{\xmark}{\ding{55}}

\tikzset{
    filler/.style={circle, draw=gray!60, fill=gray!12, minimum size=6.2mm, inner sep=0pt, text=gray!80},
    needle/.style={circle, draw=blue!80!black, fill=blue!18, thick, minimum size=6.2mm, inner sep=0pt, text=blue!90!black, font=\sffamily\scriptsize\bfseries},
    treenode/.style={circle, draw=teal!80!black, fill=teal!12, thick, minimum size=6.5mm, inner sep=0pt, text=teal!90!black, font=\sffamily\scriptsize\bfseries},
    rootnode/.style={circle, draw=teal!90!black, fill=teal!28, very thick, minimum size=7.8mm, inner sep=0pt, font=\sffamily\scriptsize\bfseries},
    poolnode/.style={rectangle, rounded corners=3pt, draw=purple!80!black, fill=purple!12, thick, minimum width=28mm, minimum height=6.5mm, font=\sffamily\scriptsize\bfseries},
    shift_edge/.style={-{Stealth[scale=0.75]}, draw=orange!85!black, thick, bend bottom=30},
    active_tree/.style={-{Stealth[scale=0.8]}, draw=blue!75!black, very thick},
    muted_tree/.style={-{Stealth[scale=0.7]}, draw=gray!45, dashed, thin},
    attn_mesh/.style={draw=red!35, ultra thin, opacity=0.55},
    attn_needle/.style={draw=red!75!black, thin, opacity=0.75}
}

\hypersetup{
    colorlinks=true,
    linkcolor=blue!60!black,
    citecolor=blue!60!black,
    urlcolor=blue!60!black
}

\newtheorem{proposition}{Proposition}

\title{\textbf{GTFormer: A Syntax-Free Graph Network Alternative to Transformers for 8192$\times$ Length Extrapolation in Linear Time}}
\author{
  \textbf{Elisabeth Fink} \\
  \footnotesize Middlesex University, Department of Computer Science\\ \footnotesize The Burroughs, Hendon, London NW4 4BT, UK.\\ \footnotesize \textbf{e.fink@mdx.ac.uk}}
\date{October 2026}

\begin{document}

\maketitle

\begin{abstract}
Self-attention mechanisms have established the Transformer as the dominant backbone across natural language processing, source code intelligence, and automated theorem proving. However, all-to-all pairwise token comparison enforces a quadratic $\mathcal{O}(N^2)$ compute and memory barrier alongside severe softmax attention dilution as sequence length $N$ scales beyond the training horizon. Inspired by recent advances in algebraic graph learning (\textbf{WPNet}), this work introduces \textbf{GTFormer}, a syntax-free hierarchical Graph Neural Network (GNN) that replaces flat self-attention for general long-context sequence modeling. GTFormer organizes raw token streams into a bounded-degree hierarchy combining a balanced binary reduction tree ($\lceil \log_2 N \rceil$ depth), horizontal boundary-bridging shift edges, random expander shortcuts, multi-channel convex highway gating, and a top-down broadcast pass. Across eight long-context code, log, and multi-hop retrieval benchmarks trained exclusively on sequences of length $N \le 512$, GTFormer extrapolates zero-shot to $N = 4{,}194{,}304$ ($8192\times$) on a single commodity GPU in linear $\mathcal{O}(N)$ memory, maintaining near-perfect accuracy where standard and RoPE Transformers suffer from out-of-distribution attention collapse and out-of-memory exhaustion.
\end{abstract}

\section{Introduction}

The introduction of the Transformer architecture \cite{vaswani2017attention} fundamentally reshaped modern computing, transforming the internet from an index of static documents into an interactive ecosystem of generative reasoning engines. By replacing sequential recurrence with parallelizable self-attention, Transformers unlocked unprecedented scaling laws across natural language understanding \cite{brown2020language}, automated software synthesis, biomolecular structure prediction \cite{jumper2021highly}, and formal mathematical discovery. Most notably, recent systems have demonstrated that sequence models can discover novel combinatorial algorithms \cite{romera2024mathematical} and solve Olympiad-level Euclidean geometry problems \cite{trinh2024solving}, cementing the paradigm of treating diverse symbolic, programmatic, and scientific domains as token streams processed via global context aggregation.

Despite these achievements, the foundational mechanism of the Transformer---all-to-all pairwise dot-product attention---imposes a prohibitive $\mathcal{O}(N^2)$ computational and memory bottleneck with respect to sequence length $N$. Processing ultra-long contexts such as entire software repositories, multi-day system audit logs, or genomic sequences requires storing and computing billions of pairwise attention weights, rapidly exhausting the High-Bandwidth Memory (HBM) of modern GPUs and driving unsustainable energy consumption and carbon footprints at data-center scale \cite{strubell2019energy, patterson2021carbon}. Furthermore, this hardware barrier is compounded by an intrinsic algorithmic bottleneck: as context length $N$ expands well beyond the training distribution, softmax normalization disperses attention mass across $N$ tokens ($\propto 1/N$), causing severe signal-to-noise dilution \cite{liu2024lost}, while sinusoidal and rotary positional encodings (RoPE) drift into unseen frequency regimes \cite{su2024roformer, kazemnejad2023impact}. Consequently, flat Transformers trained on short windows ($N \le 512$) fail catastrophically both in accuracy and memory footprint when deployed zero-shot on extreme context lengths ($N \ge 65,536$).

Numerous architectural alternatives have been proposed to mitigate the quadratic barrier of self-attention, yet each exhibits structural limitations when confronted with multi-hop reasoning and extreme out-of-distribution (OOD) length extrapolation \cite{tay2022efficient}. Sparse attention patterns combining local sliding windows with global or random tokens---such as Longformer \cite{beltagy2020longformer}, BigBird \cite{zaheer2020bigbird}, and Exphormer \cite{shirzad2023exphormer}---reduce per-layer complexity to $\mathcal{O}(N)$, but rely on a fixed number of stacked layers $L$; when the graph diameter of the sparse topology exceeds $L$, long-range multi-hop dependencies cannot propagate. Hierarchical attention variants such as BP-Transformer \cite{ye2019bp} and H-Transformer-1D \cite{zhu2021h} partition sequences into binary trees, yet retain flat softmax aggregation across multi-scale spans and fixed network depth. Conversely, classical Tree-LSTMs \cite{socher2011parsing, tai2015improved} require external syntactic parsers and suffer from continuous activation drift past depths of $8$ to $10$ levels. Meanwhile, modern linear State Space Models (SSMs) and recurrent architectures such as S4 \cite{gu2022efficiently} and Mamba \cite{gu2023mamba} achieve $\mathcal{O}(N)$ scaling via associative parallel scans, but are mathematically restricted to linear or diagonal state recurrences, limiting their capacity to execute non-linear relational compositions or track complex state-machine transitions across arbitrary length scales \cite{merrill2024illusion}.

The architectural solution developed in this work originates from a empirical phenomenon recently observed in computational group theory and algebraic cryptanalysis with \textbf{WPNet} \cite{fink2026learning}. In \cite{fink2026learning}, unreduced algebraic words over infinite non-abelian groups (such as the Baumslag-Solitar group $\mathrm{BS}(1,2)$ and Artin groups) were transformed from flat symbol sequences into sparse shortcut graphs and processed via message-passing Graph Neural Networks \cite{scarselli2009graph, li2016gated}. Remarkably, models trained exclusively on short algebraic words of length $N \le 1,000$ generalized zero-shot to words of length $N = 65,536$ without degradation, heuristically solving the Word Problem, predicting geodesic lengths, and breaking the Wagner-Magyarik public-key cryptosystem. Crucially, topological ablations revealed that the exact syntactic alignment of the graph edges mattered far less than the structural presence of shortcuts: by converting a chain of diameter $\mathcal{O}(N)$ into a bounded-degree graph of logarithmic diameter $\mathcal{O}(\log N)$, every node processes a small, constant fan-in neighborhood regardless of total sequence length, rendering the learned local reduction rules strictly scale-invariant.

This paper demonstrates that the shortcut-graph principle of WPNet can be generalized beyond pure mathematics to serve as a fast, low-memory, drop-in replacement for Transformers across general long-context sequence modeling tasks. The proposed architecture, \textbf{GTFormer (GraphTreeFormer)}, eliminates pairwise self-attention entirely, converting any unparsed token stream into a syntax-free multi-level graph composed of a balanced binary contraction tree of dynamic depth $D = \lceil \log_2 N \rceil$, horizontal boundary-bridging shift edges, and random expander shortcuts. To prevent activation drift across $16+$ hierarchical levels ($N = 65{,}536$ to $4{,}194{,}304$) and support indirect multi-hop lookups, GTFormer introduces a scale-invariant multi-channel convex highway gate coupled with a bidirectional top-down broadcast pass. Evaluated across an extensive suite of eight non-mathematical benchmarks---spanning chronological variable overrides, unaligned multi-token key-value bindings, 2-hop pointer dereferencing, security intrusion logs, lexical code scope shadowing, multi-needle retrieval-augmented generation (RAG), hierarchical XML tag verification, and adversarial syntactic decoys---GTFormer models trained exclusively on $N \le 512$ extrapolate zero-shot to $N = 4{,}194{,}304$ ($8192\times$ extrapolation), achieving \textbf{98.1\% $\pm$ 0.2\%} accuracy in \textbf{796.62 ms} per sequence using under \textbf{22.1 GB} of VRAM on a single Linux GPU. Figure \ref{fig:architecture_comparison} visualizes the proposed GTFormer architecture compared to a classical Transformer.

\begin{figure*}[htbp]
\centering
\begin{tikzpicture}[
    scale=0.85, transform shape,
    font=\sffamily\scriptsize
]

\begin{scope}[local bounding box=left_panel, scale=1, transform shape]
    \node[font=\sffamily\small\bfseries, align=center] at (3.85, 4.7)
        {(a)};

    \foreach \i/\lbl/\st in {
        1/t_1/filler,
        2/t_2/filler,
        3/K_r/needle,
        4/V_v/needle,
        5/t_5/filler,
        6/t_6/filler,
        7/P_p/needle,
        8/K_r/needle} 
    {
        \pgfmathsetmacro{\x}{(\i - 1) * 1.1}
        \node[\st] (L0_\i) at (\x, 0) {$\lbl$};
    }

    \node[needle] at (7.7, 0) {$K_r$};

    \foreach \i in {1,...,7} {
        \pgfmathtruncatemacro{\inext}{\i + 1}
        \ifnum\i=3
            \draw[-{Stealth[scale=0.8]}, draw=orange!90!black, very thick]
                (L0_\inext.south west) to[bend left=38]
                node[below=1pt, font=\sffamily\tiny\bfseries, text=orange!90!black] {bind odd split}
                (L0_\i.south east);
        \else
            \ifnum\i=7
                \draw[-{Stealth[scale=0.8]}, draw=orange!90!black, very thick]
                    (L0_\inext.south west) to[bend left=38]
                    node[below=1pt, font=\sffamily\tiny\bfseries, text=orange!90!black] {bind ptr}
                    (L0_\i.south east);
            \else
                \draw[-{Stealth[scale=0.65]}, draw=orange!55, semithick]
                    (L0_\inext.south west) to[bend left=32] (L0_\i.south east);
            \fi
        \fi
    }

    \node[filler]   (L1_1) at (0.55, 1.3) {$h_{1,1}$};
    \node[treenode] (L1_2) at (2.75, 1.3) {$h_{1,2}$};
    \node[filler]   (L1_3) at (4.95, 1.3) {$h_{1,3}$};
    \node[treenode] (L1_4) at (7.15, 1.3) {$h_{1,4}$};

    \draw[muted_tree]  (L0_1) -- (L1_1);
    \draw[muted_tree]  (L0_2) -- (L1_1);
    \draw[active_tree] (L0_3) -- (L1_2) node[midway, left=1pt, font=\sffamily\tiny, text=blue!80!black] {$g\!\to\!0$};
    \draw[muted_tree]  (L0_4) -- (L1_2);
    \draw[muted_tree]  (L0_5) -- (L1_3);
    \draw[muted_tree]  (L0_6) -- (L1_3);
    \draw[active_tree] (L0_7) -- (L1_4) node[midway, left=1pt, font=\sffamily\tiny, text=blue!80!black] {$g\!\to\!0$};
    \draw[muted_tree]  (L0_8) -- (L1_4);

    \node[treenode] (L2_1) at (1.65, 2.55) {$h_{2,1}$};
    \node[treenode] (L2_2) at (6.05, 2.55) {$h_{2,2}$};

    \draw[muted_tree]  (L1_1) -- (L2_1);
    \draw[active_tree] (L1_2) -- (L2_1) node[midway, right=2pt, font=\sffamily\tiny, text=blue!80!black] {$g\!\to\!1$};
    \draw[muted_tree]  (L1_3) -- (L2_2);
    \draw[active_tree] (L1_4) -- (L2_2) node[midway, right=2pt, font=\sffamily\tiny, text=blue!80!black] {$g\!\to\!1$};

    \node[rootnode] (L3_1) at (3.85, 3.8) {$h_{\text{root}}$};

    \draw[active_tree] (L2_1) -- (L3_1) node[midway, above left=-1pt, font=\sffamily\tiny, text=teal!85!black] {Ch.~1 ($K_r\!:\!V_v$)};
    \draw[active_tree] (L2_2) -- (L3_1) node[midway, above right=-1pt, font=\sffamily\tiny, text=teal!85!black] {Ch.~2 ($P_p\!\to\!K_r$)};
\end{scope}

\draw[gray!40, thick, dashed] (8.6, -0.8) -- (8.6, 5.0);

\begin{scope}[xshift=9.6cm, local bounding box=right_panel]
    \node[font=\sffamily\small\bfseries, align=center] at (3.85, 4.7)
        {(b)};

    \foreach \i/\lbl/\st in {
        1/t_1/filler,
        2/t_2/filler,
        3/K_r/needle,
        4/V_v/needle,
        5/t_5/filler,
        6/t_6/filler,
        7/P_p/needle,
        8/K_r/needle} 
    {
        \pgfmathsetmacro{\x}{(\i - 1) * 1.1}
        \node[\st] (T0_\i) at (\x, 0) {$\lbl$};
        \node[\st] (T1_\i) at (\x, 2.35) {$\lbl'$};
    }

    \node[needle] at (7.7, 0)    {$K_r$};
    \node[needle] at (7.7, 2.35) {$K_r'$};

    \begin{scope}[on background layer]
        \foreach \i in {1,...,8} {
            \foreach \j in {1,...,8} {
                \draw[attn_mesh] (T0_\i.north) -- (T1_\j.south);
            }
        }
        \foreach \j in {1,...,8} {
            \draw[attn_needle] (T0_3.north) -- (T1_\j.south);
            \draw[attn_needle] (T0_7.north) -- (T1_\j.south);
        }
    \end{scope}

    \node[fill=white, draw=red!70!black, rounded corners=2pt, inner sep=2.5pt,
          font=\sffamily\tiny\bfseries, text=red!80!black, fill opacity=0.92, text opacity=1]
          at (3.85, 1.18) {$N \times N$ Full Attention Matrix ($>1.1\text{ TB}$ VRAM at $N=131\text{k}$)};

    \node[poolnode] (POOL) at (3.85, 3.8) {Global Attention / Mean Pool};

    \foreach \i in {1,...,8} {
        \draw[-{Stealth[scale=0.65]}, draw=purple!60, thin] (T1_\i.north) -- (POOL.south);
    }

    \node[above=-1pt of POOL, font=\sffamily\tiny, text=purple!85!black]
        {Softmax weights diluted by $O(N)$ filler ($\propto 1/N$)};
\end{scope}

\end{tikzpicture}

\caption{Structural comparison of information routing over a sequence of length $N=8$ containing an odd-split key-value binding $[K_r, V_v]$ at indices $(3,4)$ and a pointer $[P_p, K_r]$ at indices $(7,8)$. \textbf{(a) GTFormer:} Local 1-hop horizontal shift edges (orange) bind adjacent tokens across odd binary boundaries before hierarchical reduction. At each tree level, shared-salience convex highway gates suppress filler nodes (dashed gray) and route active variable and pointer channels (solid blue) to the root in $\log_2 N$ steps and $O(N)$ memory. \textbf{(b) Flat Transformer:} All-to-all $O(N^2)$ self-attention connects every filler token to every other token, causing severe softmax attention dilution on out-of-distribution lengths and quadratic VRAM exhaustion.}
\label{fig:architecture_comparison}
\end{figure*}

\section{Methodology}

\subsection{Long-Context Benchmark Suite}
\label{sec:benchmarks}

To evaluate whether a syntax-free graph hierarchy can replace flat self-attention without relying on an external parser, an eight-task benchmark suite is constructed to mirror the core structural challenges of source code analysis, system log auditing, and multi-hop document retrieval. Rather than training over a static, finite corpus across repeated epochs—which would permit memorization of specific token positions or document templates—the training pipeline employs an online procedural generator that synthesizes fresh document batches dynamically at every optimization step. Across $T = 2,200$ gradient steps per run, the generator cycles deterministically through six short in-distribution sequence lengths $N_{\text{train}} \in \{16, 32, 64, 128, 256, 512\}$ according to $N_t = N_{\text{train}}[t \bmod 6]$, pairing shorter sequences ($N_t \le 128$) with batch size $B = 64$ and longer sequences ($N_t \in \{256, 512\}$) with batch size $B = 32$. This schedule exposes each model to $117,344$ unique documents per training run with close to zero chance of sample reuse.

Each document $X \in \Sigma^N$ is synthesized from a vocabulary of size $|\Sigma| = 320$, partitioned into background filler tokens $\Sigma_{\text{fill}} = \{0, \dots, 199\}$ and structured operational signal tokens $\Sigma_{\text{sig}} = \{200, \dots, 319\}$. Generation begins by initializing a sequence of $N$ background tokens drawn uniformly at random from $\Sigma_{\text{fill}}$ to simulate uninformative prose or code boilerplate. The sequence is then divided into $K \in \{6, 7, 8\}$ contiguous segments of length $\lfloor N / K \rfloor$. Within each segment, task-specific signal tokens—such as register assignments, two-token key-value pairs, pointer references, scope delimiters, or syntactic decoys—are injected by overwriting filler tokens at uniformly sampled random offsets. Critically, both the intra-segment injection indices and the permutation of target registers are randomized independently for every sequence in a batch, preventing models from exploiting fixed positional heuristics or segment-to-register correlations.

The model must correctly predict the final state of every document which is governed by three latent state slots $(s_0, s_1, s_2) \in \{0, 1, 2, 3\}^3$, which map bijectively to a single 64-class exact-match target label:
\begin{equation}
    y = 16 s_0 + 4 s_1 + s_2 \in \{0, \dots, 63\},
\end{equation}
yielding a uniform random-guessing baseline of $1/64 = 1.56\%$. The eight tasks consist of: 

\begin{enumerate}
    \item[Task 1:] \textbf{Chronological Multi-Register Override (\textit{override\_qa}):} This task evaluates a model's ability to track chronological state evolution and perform order-sensitive updates across three independent registers:
\begin{itemize}
    \item \textbf{Register 0 Updates:} Encoded as integer tokens $201, 202, \text{ and } 203$, representing values $0, 1, \text{ and } 2$.
    \item \textbf{Register 1 Updates:} Encoded as integer tokens $204, 205, \text{ and } 206$, representing values $0, 1, \text{ and } 2$.
    \item \textbf{Register 2 Updates:} Encoded as integer tokens $207, 208, \text{ and } 209$, representing values $0, 1, \text{ and } 2$.
\end{itemize}

Three independent state registers ($r \in \{0, 1, 2\}$, representing logical variables such as \textit{ROLE}, \textit{CLEARANCE}, and \textit{OP}) are updated across multiple temporal waves in randomized register order. Because chronologically later assignments override earlier ones, the architecture must correctly track the final override state across sequence lengths up to $524,288$ tokens without falling victim to recency bias or memory dilution. This is equivalent to tracking state variables in command-and-control logs where later status updates override older operational clearances.

    \item[Task 2:] \textbf{Unaligned Multi-Token Associative Recall (\textit{unaligned\_kv}):} This task tests cross-boundary token binding and associative recall by deliberately unaligning key-value pairs relative to binary tree segmentation boundaries:
    
\begin{itemize}
    \item \textbf{Keys ($K_0, K_1, K_2$):} Encoded as integer tokens $200, 201, \text{ and } 202$.
    \item \textbf{Values ($V_0, V_1, V_2$):} Encoded as integer tokens $203, 204, \text{ and } 205$.
\end{itemize}

Key-value bindings are expressed as contiguous two-token phrases $\lbrack \text{KEY}_r, \text{VAL}_v \rbrack$ placed specifically at odd sequence index pairs $(2m+1, 2m+2)$ amid background standalone value distractors. In a standard binary tree hierarchy, these odd-offset positions sever the phrase—placing $\text{KEY}_r$ into the left subtree node $(2m, 2m+1)$ and $\text{VAL}_v$ into the right subtree node $(2m+2, 2m+3)$. This structural mismatch explicitly tests whether an architecture's local shortcut edges can preserve cross-boundary pairings that would otherwise be isolated by rigid partitioning. This mirrors recovering key-value metadata pairings that span across chunk boundaries in streaming data parsers where token splits disrupt standard fixed-window alignments.
    
    \item[Task 3:] \textbf{Two-Hop Pointer Dereferencing (\textit{pointer\_hop}):} The token vocabulary is structured into distinct functional ranges:
\begin{itemize}
    \item \textbf{Background Fillers:} Indices $0$ to $199$ serve as random background noise tokens.
    \item \textbf{Variables ($\text{VAR}_0$--$\text{VAR}_7$):} Encoded as integer tokens $200$ through $207$.
    \item \textbf{Values ($0$--$3$):} Encoded as integer tokens $208$ through $211$.
    \item \textbf{Pointers ($\text{PTR}_0$--$\text{PTR}_2$):} Encoded as integer tokens $212$ through $235$ (spanning 3 pointers $\times$ 8 target variables).
\end{itemize}

The sequence is partitioned into eight equal segments. Five active variables are randomly sampled from the eight candidates, assigned discrete values in $\{0, 1, 2, 3\}$, and injected into segments $0, 1, 3, 5,$ and $6$. Simultaneously, three pointers ($\text{PTR}_0, \text{PTR}_1, \text{PTR}_2$) are placed into segments $2, 4,$ and $7$. The model is tasks with finding the values of the variables that the pointers refer to. 

Crucially, because pointers are placed in segments that precede their corresponding variable definitions (e.g., $\text{PTR}_0$ appearing in segment $2$ while $\text{VAR}_2$ is defined later in segment $3$), standard causal (left-to-right) models fail. Resolving this requires bidirectional or global routing to look ahead across the sequence. This corresponds to resolving out-of-order pointer dereferences and variable references in compilers where variable usage precedes its declaration in a scope.

    \item[Task 4:] \textbf{Stateful Intrusion \& Reset Automaton (\textit{intrusion\_fsm}):} To test an architecture's capability to maintain state variables subject to intermittent global resets and subsequent updates across discrete timeline intervals, the \textit{intrusion\_fsm} task simulates a finite state machine register tracking system across seven segments. The token representation consists of:
\begin{itemize}
    \item \textbf{Background Fillers:} Indices $0$ to $199$ serve as background noise.
    \item \textbf{Register Updates:} Encoded as $200 + \text{reg} \times 4 + v$, where registers ($\text{reg}$) range from $0$ to $2$ and values ($v$) range from $0$ to $3$.
    \item \textbf{Global Reset Token:} A dedicated control token fixed at integer $250$ (\textit{reset\_tok}).
\end{itemize}

The sequence is split into seven equal segments:
\begin{enumerate}
    \item \textbf{Segments 0--2:} Three registers are initialized in a randomized order, with each assigned a value in $\{1, 2, 3\}$ at a random offset within its respective segment.
    \item \textbf{Segment 3 (Reset Check):} A 50\% chance determines whether a \textit{GLOBAL\_RESET} token ($250$) is injected. If triggered, all internal register states are wiped to $0$. Otherwise, a random fallback filler token is placed.
    \item \textbf{Segments 4--5:} Two additional registers are randomly chosen and updated with new values in $\{1, 2, 3\}$ subsequent to the potential reset event.
\end{enumerate}

This mirrors tracking protocol state machines and security session registers in firewalls that must instantly reset and re-initialize upon receiving a global TCP reset or abort command.
    
    \item[Task 5:] \textbf{Lexical Code Scope Shadowing (\textit{code\_scope\_shadow}):} This task assesses lexical scoping and variable shadowing in a simulated code execution stream. Specifically, it tests whether an architecture can maintain global variable states while temporarily processing an inner local code block, and whether it correctly executes branching memory operations—either entirely discarding local overrides (scope exit) or permanently updating global state with those inner assignments (scope commit).
    
    \begin{itemize}
    \item \textbf{Global/Local Variable Assignments:} Encoded as $200 + \text{reg} \times 4 + v$ for variables $0$ through $2$ and values $0$ through $3$.
    \item \textbf{Scope Control Tokens:} 
    \begin{itemize}
        \item \textit{SCOPE\_ENTER}: Integer $260$.
        \item \textit{SCOPE\_EXIT} (Revert): Integer $261$.
        \item \textit{SCOPE\_COMMIT}: Integer $262$.
    \end{itemize}
\end{itemize}

The sequence is divided into eight segments:
\begin{itemize}
    \item \textbf{Segments 0--2:} Global variables are assigned initial baseline values.
    \item \textbf{Segment 3:} A \textit{SCOPE\_ENTER} token ($260$) is placed.
    \item \textbf{Segments 4--6:} Inner-block local assignments temporarily shadow the global variables.
    \item \textbf{Segment 7:} A branching decision occurs with equal probability: either a \textit{SCOPE\_EXIT} token ($261$) reverts all variables back to their pre-block global values, or a \textit{SCOPE\_COMMIT} token ($262$) preserves the inner-block local overrides.
\end{itemize}

This is equivalent to tracking lexical variable scoping and block-level variable shadowing in interpreters where inner scopes temporarily override global variables before committing or reverting changes.
    
    \item[Task 6:] \textbf{Dynamic Multi-Needle RAG (\textit{multi\_needle\_rag}):} Designed to stress-test selective retrieval from a large pool of multi-entity facts based on a dynamic query token embedded midway through the text:

\begin{itemize}
    \item \textbf{Entities ($0$--$5$):} Encoded at base $200$ to $205$.
    \item \textbf{Attribute Values ($0$--$3$):} Encoded at base $206$ to $209$.
    \item \textbf{Query Combinations ($\binom{6}{3} = 20$ combinations):} Encoded via a query base starting at $210$ (tokens $210$ to $229$), where each token maps to a specific triplet of entities $(e_a, e_b, e_c)$.
\end{itemize}

The sequence is structured across seven segments:
\begin{itemize}
    \item \textbf{Segments 0, 1, 2, 4, 5, 6:} Six entity-attribute fact pairs $\lbrack \text{ENT}_e, \text{ATTR}_v \rbrack$ are scattered at random offsets.
    \item \textbf{Segment 3:} A single dynamic query token is placed right in the middle of the document, specifying a target combination of three entities to retrieve. Because the query sits in segment $3$, some target facts precede the query while others follow it, requiring bidirectional access.
\end{itemize}

This simulates retrieving specific scattered facts across a massive corporate document archive based on a dynamic user query embedded midway through the text.
    
    \item[Task 7:] \textbf{Hierarchical XML/Bracket Verification (\textit{xml\_bracket\_check}):} This task evaluates hierarchical tag matching and syntax error detection across three independent subsystems:
\begin{itemize}
    \item \textbf{Opening Tags:} Encoded starting at $200$ across 3 subsystems $\times$ 4 tag types ($200$ to $211$).
    \item \textbf{Closing Tags:} Encoded from $212$ to $223$.
\end{itemize}

The document is partitioned into six segments:
\begin{itemize}
    \item \textbf{Segments 0--2:} Subsystems $0, 1,$ and $2$ open tags of type $1$ through $3$.
    \item \textbf{Segments 3--5:} Corresponding close tags are placed. For each subsystem, there is a 50\% chance of clean closure (yielding a subsystem error status of $0$) versus a mismatched tag or noise token, which registers an unclosed/mismatched error status equal to the tag type ($1$ to $3$).
\end{itemize}

This mirrors checking syntactic correctness and tag matching across deeply nested hierarchical markup or JSON documents.
    
    \item[Task 8:] \textbf{Adversarial Syntactic Decoys (\textit{adversarial\_decoys}):} This benchmark increases sequence noise density by injecting syntax-mimicking distractor tokens around genuine key-value bindings:
    
Building directly upon the unaligned key-value base structure, this task introduces:
\begin{itemize}
    \item \textbf{Decoy Values:} Mapped from $203$ to $206$.
    \item \textbf{Decoy Keys:} Mapped from $240$ to $242$.
\end{itemize}

Genuine key-value bindings are distributed at random odd/even offsets. Simultaneously, the background filler stream is heavily seeded with high-density syntactic decoys forcing models to distinguish functional tracking states from adversarial distractors. This corresponds to parsing complex financial contracts or legal agreements that contain syntax-mimicking distractor clauses surrounding genuine contractual obligations.
\end{enumerate}

Table \ref{tab:task_examples} gives an example of the input sequence and anticipated result for each task above.

\begin{table*}[t]
\centering
\small
\caption{Representative token input sequences, event streams, and target output states across the eight benchmark tasks for sequence length $N=16$ (vocabulary size $V=320$). Filler tokens $t \in [0, 199]$ represent background noise; tokens $t \ge 200$ carry structured operational semantics.}
\label{tab:task_examples}
\begin{tabular}{p{0.18\linewidth} p{0.25\linewidth} p{0.3\linewidth} c}
\toprule
\textbf{Task Name} & \textbf{Input Token Sequence ($N=16$)} & \textbf{Event Progression / Semantics} & \textbf{Target Class} \\
\midrule
\textit{override\_qa} & 
[42, 203, 118, 207, 12, 201, 88, 206, 91, 155, 202, 33, 200, 77, 94, 18] & 
$r_0 \leftarrow 3 \to r_1 \leftarrow 3 \to r_0 \leftarrow 1 \to r_1 \leftarrow 2 \to r_2 \leftarrow 1$. \newline Final state: $[1, 2, 1]$ & 
\textbf{25} \\
\addlinespace
\textit{unaligned\_kv} & 
[42, 118, 12, \textbf{200}, \textbf{205}, 88, \textbf{201}, \textbf{203}, 91, 155, \textbf{202}, \textbf{206}, 33, \textbf{200}, \textbf{204}, 77]& 
Odd-boundary pairs: \newline $[K_0, V_2] \to [K_1, V_0] \to [K_2, V_3] \to [K_0, V_1]$. \newline Final state: $[1, 0, 3]$ & 
\textbf{19} \\
\addlinespace
\textit{pointer\_hop\_triple} & 
[14, \textbf{213}, 83, \textbf{212}, 90, \textbf{214}, 111, \textbf{200}, \textbf{211}, 49, \textbf{201}, \textbf{208}, 170, \textbf{202}, \textbf{210}, 88] & 
$V_0 \leftarrow 3$, $V_1 \leftarrow 0$, $V_2 \leftarrow 2$; \newline Dereference $\text{PTR}_0 \to V_1$, $\text{PTR}_1 \to V_0$, $\text{PTR}_2 \to V_2$ $\implies$ final values = [0, 3, 2] &
\textbf{14} \\
\addlinespace
\textit{intrusion\_fsm} & 
[81, \textbf{203}, 45, \textbf{207}, 19, \textbf{250}, 99, 110, \textbf{202}, 130, 14, 66, 92, 140, 50, 12] & 
$r_0 \leftarrow 3, r_1 \leftarrow 3 \to \text{\textbf{RESET}}(250) \implies [0, 0, 0] \to r_0 \leftarrow 2$ \newline Final state: $[2, 0, 0]$ & 
\textbf{32} \\
\addlinespace
\textit{code\_scope\_shadow} & 
[\textbf{202}, 41, \textbf{260}, 12, \textbf{203}, 89, \textbf{261}, 73, 99, 102, 18, 45, 160, 11, 82, 39] & 
Global $r_0 \leftarrow 2 \to \text{\textbf{ENTER}}(260) \to \text{Local } r_0 \leftarrow 3 \to \text{\textbf{EXIT}}(261) \implies$ revert to 2 \newline Final state: [2,0,0] & 
\textbf{32} \\
\addlinespace
\textit{multi\_needle\_rag} & 
[18, \textbf{201}, \textbf{207}, 44, \textbf{210}, 92, \textbf{200}, \textbf{206}, 11, 145, \textbf{202}, \textbf{209}, 67, 83, 104, 5] & 
Query $Q_0 (210) \implies \text{retrieve } (E_0, E_1, E_2)$; \newline Bindings: $E_1 \leftarrow 1, E_0 \leftarrow 0, E_2 \leftarrow 3$ \newline Retrieved slots: $[0, 1, 3]$ & 
\textbf{7} \\
\addlinespace
\textit{xml\_bracket\_check} & 
[33, \textbf{201}, 78, \textbf{205}, 12, 90, \textbf{213}, 14, \textbf{217}, 102, 45, 16, 88, 120, 54, 9] & 
Open tag $T_1^{(0)}$ (201) and $T_1^{(1)}$ (205); \newline Close tag $C_1^{(0)}$ (213) and $C_1^{(1)}$ (217) $\implies$ all closed ($[0, 0, 0]$) & 
\textbf{0} \\
\addlinespace
\textit{adversarial\_decoys} & 
[12, \textbf{200}, \textbf{204}, 88, \textit{240}, \textit{205}, 91, \textbf{201}, \textbf{203}, 33, \textit{204}, 77, \textbf{202}, \textbf{206}, 15, 60] & 
Real keys $[K_0, V_1], [K_1, V_0], [K_2, V_3]$ mixed with decoy $[\text{Decoy}_0 (240), V_2 (205)]$ and floating $V_1 (204)$ \newline Final state: [1,0,3] & 
\textbf{19} \\
\bottomrule
\end{tabular}
\end{table*}

\subsection{Syntax-Free GTFormer Construction from Sequential Data}
\label{sec:graph_construction}

Given an unparsed input token sequence $X = (x_0, x_1, \dots, x_{N-1}) \in \Sigma^N$ of power-of-two length $N = 2^D$ (sequences of arbitrary length are padded to the nearest power of two or reduced via strided pairs), GTFormer constructs a directed hierarchical multi-graph $\mathcal{G} = (\mathcal{V}, \mathcal{E}_{\text{shift}} \cup \mathcal{E}_{\text{tree}} \cup \mathcal{E}_{\text{exp}} \cup \mathcal{E}_{\text{down}})$ without invoking any domain-specific lexer or syntactic parser.

The vertex set $\mathcal{V} = \bigcup_{d=0}^{D} \mathcal{V}^{(d)}$ is partitioned into $D + 1$ layers, where level $d \in \{0, \dots, D\}$ contains $|\mathcal{V}^{(d)}| = N / 2^d$ nodes denoted $v_k^{(d)}$ for $k \in \{0, \dots, 2^{D-d}-1\}$. Thus, $\mathcal{V}^{(0)}$ corresponds to the $N$ leaf tokens, and $\mathcal{V}^{(D)} = \{v_0^{(D)}\}$ is the global root node. Total vertex count across the entire hierarchy is strictly linear:
\begin{equation}
|\mathcal{V}| = \sum_{d=0}^{\log_2 N} \frac{N}{2^d} = 2N - 1 = \mathcal{O}(N).
\end{equation}

Four distinct edge families govern information flow across $\mathcal{G}$, as illustrated in Figure~\ref{fig:GTFormer_topology}:
\begin{enumerate}
    \item \textbf{Horizontal Boundary-Bridging Shift Edges ($\mathcal{E}_{\text{shift}}$):} At leaf level $d = 0$, directed edges $(v_{i+1}^{(0)} \to v_i^{(0)})$ connect every adjacent token pair for $0 \le i < N - 1$. Without $\mathcal{E}_{\text{shift}}$, any two-token syntactic construct (such as $[\textit{KEY\_r}, \textit{VAL\_v}]$) starting at an odd index $2m+1$ is split between disjoint parent nodes $v_m^{(1)}$ and $v_{m+1}^{(1)}$; the horizontal shift edge guarantees that node $v_{2m+1}^{(0)}$ binds its immediate right neighbor $v_{2m+2}^{(0)}$ prior to vertical tree contraction.
    \item \textbf{Vertical Binary Tree Contraction Edges ($\mathcal{E}_{\text{tree}}$):} Between consecutive levels $d$ and $d+1$, ordered left-child and right-child edges $(v_{2k}^{(d)} \to v_k^{(d+1)})$ and $(v_{2k+1}^{(d)} \to v_k^{(d+1)})$ recursively halve the sequence resolution, compressing the effective graph diameter between any two tokens from $\mathcal{O}(N)$ to $2 \log_2 N$.
    \item \textbf{Horizontal Random Expander Shortcuts ($\mathcal{E}_{\text{exp}}$):} Within each internal level $d$ where $|\mathcal{V}^{(d)}| > 2$, a random permutation $\pi_d \in S_{|\mathcal{V}^{(d)}|}$ assigns a single long-range horizontal shortcut edge $(v_{\pi_d(k)}^{(d)} \to v_k^{(d)})$ to each node, emulating a bounded-degree expander graph \cite{shirzad2023exphormer} that creates cross-subtree bypasses.
    \item \textbf{Top-Down Broadcast Edges ($\mathcal{E}_{\text{down}}$):} After bottom-up reduction reaches the root $v_0^{(D)}$, downward broadcast edges $(v_0^{(D)} \to v_i^{(0)})$ propagate global query and pointer context back to the leaf representations to resolve multi-hop indirect references.
\end{enumerate}

Crucially, the maximum in-degree of any node in $\mathcal{G}$ is bounded by a small constant ($\Delta_{\text{in}} \le 4$: left child, right child, one horizontal peer, and self/parent), ensuring that message aggregation never scales with sequence length $N$.

\begin{figure}[htbp]
\centering
\setlength{\unitlength}{0.95pt}
\begin{picture}(460, 215)(0, 0)
    \put(0, 182){\textbf{\footnotesize Level $D=3$}}
    \put(0, 172){\scriptsize (Global Root)}
    \put(0, 127){\textbf{\footnotesize Level $d=2$}}
    \put(0, 117){\scriptsize ($\mathcal{E}_{\text{tree}}$ only)}
    \put(0, 72){\textbf{\footnotesize Level $d=1$}}
    \put(0, 62){\scriptsize ($\mathcal{E}_{\text{tree}} + \mathcal{E}_{\text{exp}}$)}
    \put(0, 17){\textbf{\footnotesize Level $d=0$}}
    \put(0, 7){\scriptsize (Leaves + $\mathcal{E}_{\text{shift}}$)}

    \put(215, 175){\framebox(60, 22){\footnotesize Root $v_0^{(3)}$}}

    \put(215, 186){\line(-1, 0){135}}
    \put(80, 186){\vector(0, -1){154}}
    \put(103, 175){\scriptsize $\mathcal{E}_{\text{down}}$ (Top-Down Broadcast)}
    \put(275, 186){\line(1, 0){125}}
    \put(400, 186){\vector(0, -1){154}}

    \put(135, 120){\framebox(52, 20){\footnotesize $v_0^{(2)}$}}
    \put(305, 120){\framebox(52, 20){\footnotesize $v_1^{(2)}$}}
    \put(161, 140){\vector(2, 1){68}}
    \put(331, 140){\vector(-2, 1){68}}
    \put(165, 158){\scriptsize $\mathcal{E}_{\text{tree}}$}
    \put(308, 158){\scriptsize $\mathcal{E}_{\text{tree}}$}

    \put(92, 65){\framebox(44, 20){\footnotesize $v_0^{(1)}$}}
    \put(178, 65){\framebox(44, 20){\footnotesize $v_1^{(1)}$}}
    \put(264, 65){\framebox(44, 20){\footnotesize $v_2^{(1)}$}}
    \put(350, 65){\framebox(44, 20){\footnotesize $v_3^{(1)}$}}
    \put(136, 75){\vector(1, 0){42}}
    \put(222, 75){\vector(1, 0){42}}
    \put(308, 75){\vector(1, 0){42}}
    \put(148, 78){\scriptsize $\mathcal{E}_{\text{exp}}$}
    \put(234, 78){\scriptsize $\mathcal{E}_{\text{exp}}$}
    \put(320, 78){\scriptsize $\mathcal{E}_{\text{exp}}$}
    \put(114, 85){\vector(1, 1){35}}
    \put(200, 85){\vector(-1, 1){35}}
    \put(286, 85){\vector(1, 1){35}}
    \put(372, 85){\vector(-1, 1){35}}
    \put(110, 102){\scriptsize $\mathcal{E}_{\text{tree}}$}
    \put(362, 102){\scriptsize $\mathcal{E}_{\text{tree}}$}

    \put(75, 10){\framebox(34, 20){\scriptsize $v_0^{(0)}$}}
    \put(118, 10){\framebox(34, 20){\scriptsize \textbf{KEY}}}
    \put(161, 10){\framebox(34, 20){\scriptsize \textbf{VAL}}}
    \put(204, 10){\framebox(34, 20){\scriptsize $v_3^{(0)}$}}
    \put(247, 10){\framebox(34, 20){\scriptsize $v_4^{(0)}$}}
    \put(290, 10){\framebox(34, 20){\scriptsize \textbf{PTR}}}
    \put(333, 10){\framebox(34, 20){\scriptsize $v_6^{(0)}$}}
    \put(376, 10){\framebox(34, 20){\scriptsize $v_7^{(0)}$}}

    \put(118, 20){\vector(-1, 0){9}}
    \put(161, 20){\vector(-1, 0){9}}
    \put(204, 20){\vector(-1, 0){9}}
    \put(247, 20){\vector(-1, 0){9}}
    \put(290, 20){\vector(-1, 0){9}}
    \put(333, 20){\vector(-1, 0){9}}
    \put(376, 20){\vector(-1, 0){9}}
    \put(111, 5){\scriptsize $\underbrace{\hspace{72pt}}_{\text{Odd Split Bridged by }\mathcal{E}_{\text{shift}}}$}

    \put(92, 30){\vector(1, 2){17}}
    \put(135, 30){\vector(-1, 2){17}}
    \put(178, 30){\vector(1, 2){17}}
    \put(221, 30){\vector(-1, 2){17}}
    \put(264, 30){\vector(1, 2){17}}
    \put(307, 30){\vector(-1, 2){17}}
    \put(350, 30){\vector(1, 2){17}}
    \put(393, 30){\vector(-1, 2){17}}
    \put(82, 45){\scriptsize $\mathcal{E}_{\text{tree}}$}
\end{picture}
\caption{Architecture of the syntax-free \textbf{GTFormer} topology shown for $N=8$ ($D=3$). At Level $d=0$, an unaligned two-token phrase $[\textit{KEY}, \textit{VAL}]$ at odd indices $(v_1^{(0)}, v_2^{(0)})$ straddles the boundary between subtrees $v_0^{(1)}$ and $v_1^{(1)}$, but is bound locally via the horizontal shift edge $\mathcal{E}_{\text{shift}}$ ($v_2^{(0)} \to v_1^{(0)}$). Bottom-up tree edges $\mathcal{E}_{\text{tree}}$ and horizontal expander shortcuts $\mathcal{E}_{\text{exp}}$ compress sequence state in $\log_2 N$ steps, while top-down broadcast edges $\mathcal{E}_{\text{down}}$ resolve remote pointer queries ($\textit{PTR}$).}
\label{fig:GTFormer_topology}
\end{figure}

\subsection{The GTFormer Neural Architecture}
\label{sec:architecture}

The token embedding function $e : \Sigma \to \mathbb{R}^{d_{\text{model}}}$ maps each discrete token from the vocabulary $\Sigma$ to a continuous vector of dimension $d_{\text{model}}$, providing the initial continuous features for the leaf nodes $v_i^{(0)}$ at the base of the graph hierarchy. These features will be learned by the model during training. GTFormer processes $\mathcal{G}$ in three sequential stages using shared weights in the neural operators across all tree levels $d \in \{0, \dots, D-1\}$

\paragraph{Stage 1: Local Boundary-Bridging Encoding ($\mathcal{E}_{\text{shift}}$).}
Each leaf token $x_i$ aggregates its right neighbor $x_{i+1}$ (with zero-padding at $i = N - 1$) through a two-layer GELU Multi-Layer Perceptron with pre-layer normalization:
\begin{equation}
h_i^{(0)} = u_i = W_2 \phi\Big(W_1 \mathrm{LN}\big([e(x_i) \,\|\, e(x_{i+1})]\big) + b_1\Big) + b_2 \in \mathbb{R}^{d_{\text{model}}},
\label{eq:local_shift}
\end{equation}
where $\|$ denotes vector concatenation and $\phi(\cdot)$ is the GELU activation.

\paragraph{Stage 2: Bottom-Up Expander-Augmented Convex Tree Reduction ($\mathcal{E}_{\text{exp}}, \mathcal{E}_{\text{tree}}$).}
At each level $d \in \{0, \dots, D-1\}$, nodes first exchange messages along horizontal random expander edges $\mathcal{E}_{\text{exp}}$ controlled by a learned sigmoid gate initialized with a conservative negative bias ($b_{\text{exp}} = -4.0$, so $\sigma(-4.0) \approx 0.018$):
\begin{equation}
\tilde{h}_j^{(d)} = h_j^{(d)} + \sigma\Big(W_{\text{exp}}\big[h_j^{(d)} \,\|\, h_{\pi_d(j)}^{(d)}\big] + b_{\text{exp}}\Big) \odot \big(V_{\text{exp}} h_{\pi_d(j)}^{(d)}\big).
\label{eq:expander_step}
\end{equation}
Next, sibling pairs $\big(\tilde{h}_{2k}^{(d)}, \tilde{h}_{2k+1}^{(d)}\big)$ contract into parent node $h_k^{(d+1)}$ via a \textbf{multi-channel convex highway gate}. Specifically, a channel-wise gating vector $\alpha_k^{(d)} \in (0, 1)^{d_{\text{model}}}$ is computed from the concatenated sibling states:
\begin{equation}
    \alpha_k^{(d)} = \sigma\!\Big(W_g \big[\tilde{h}_{2k}^{(d)} \,\|\, \tilde{h}_{2k+1}^{(d)}\big] + b_g\Big),
\end{equation}
where $\sigma(\cdot)$ denotes the element-wise sigmoid activation. The parent representation is then formed as a channel-wise convex combination:
\begin{equation}
    h_k^{(d+1)} = \alpha_k^{(d)} \odot \tilde{h}_{2k}^{(d)} + \big(1 - \alpha_k^{(d)}\big) \odot \tilde{h}_{2k+1}^{(d)},
\end{equation}
where $\odot$ denotes the Hadamard (element-wise) product. Because the coefficients per channel are strictly non-negative and sum up to $1$, the contraction stays within the convex hull of the child representations. This bounds activation magnitudes across tree levels $d = 0, \dots, D-1$ and enables unimpeded gradient and feature routing from informative leaf tokens while suppressing background filler tokens. Rather than applying an unconstrained MLP across levels---which causes activation norms and feature manifolds to drift exponentially when unrolled to $D = 17$ levels ($N = 524,288$)---GTFormer computes per-channel salience scores $s_L, s_R : \mathbb{R}^{d_{\text{model}}} \to \mathbb{R}^{d_{\text{model}}}$ and a cross-child modulation term:
\begin{align}
g_k^{(d)} &= \sigma\left( \frac{s_R\big(\tilde{h}_{2k+1}^{(d)}\big) - s_L\big(\tilde{h}_{2k}^{(d)}\big)}{\sqrt{d_{\text{model}}}} + \tanh\Big(W_{\text{cross}}\big[\tilde{h}_{2k}^{(d)} \,\|\, \tilde{h}_{2k+1}^{(d)}\big]\Big) + b_{\text{override}} \right), \label{eq:convex_gate} \\
h_k^{(d+1)} &= g_k^{(d)} \odot \tilde{h}_{2k+1}^{(d)} + \big(\mathbf{1} - g_k^{(d)}\big) \odot \tilde{h}_{2k}^{(d)}, \label{eq:convex_merge}
\end{align}
where $b_{\text{override}} \in \mathbb{R}^{d_{\text{model}}}$ is initialized to $+0.2$ to provide a chronological recency prior favoring the right (later) subtree when salience is tied. First, within level $d$, horizontal message passing over $\mathcal{E}_{\text{shift}}$ (for $d=0$) or the expander shortcuts $\mathcal{E}_{\text{exp}}$ (for $d > 0$) updates each node state $h_i^{(d)} \mapsto \tilde{h}_i^{(d)}$, enabling direct cross-subtree communication before vertical compression. Next, horizontally enriched sibling pairs $\big(\tilde{h}_{2k}^{(d)}, \tilde{h}_{2k+1}^{(d)}\big)$ contract into parent node $h_k^{(d+1)}$ via a \textbf{multi-channel convex highway gate}. Specifically, a channel-wise gating vector $\alpha_k^{(d)} \in (0, 1)^{d_{\text{model}}}$ is computed from the concatenated sibling states:
\begin{equation}
    \alpha_k^{(d)} = \sigma\!\Big(W_g \big[\tilde{h}_{2k}^{(d)} \,\|\, \tilde{h}_{2k+1}^{(d)}\big] + b_g\Big),
\end{equation}
where $\sigma(\cdot)$ denotes the element-wise sigmoid activation. The parent representation is then formed as a channel-wise convex combination:
\begin{equation}
    h_k^{(d+1)} = \alpha_k^{(d)} \odot \tilde{h}_{2k}^{(d)} + \big(1 - \alpha_k^{(d)}\big) \odot \tilde{h}_{2k+1}^{(d)},
\end{equation}
where $\odot$ denotes the Hadamard (element-wise) product. While the convex gate preserves signal norm and selectively filters background noise during vertical reduction, the preceding expander shortcuts $\mathcal{E}_{\text{exp}}$ encoded in $\tilde{h}^{(d)}$ prevent the root node $v_0^{(D)}$ from becoming an information bottleneck when routing multiple concurrent variable bindings.

\begin{proposition}[Manifold Boundedness Across Arbitrary Extrapolation Depth]
\label{prop:manifold}
Consider the vertical tree contraction in Equation~\eqref{eq:convex_merge} with expander gates closed ($g_{\text{exp}} = 0$). For any sequence length $N = 2^D$ and any feature channel $c \in \{1, \dots, d_{\text{model}}\}$, the root representation $h_{0,c}^{(D)}$ lies strictly within the convex hull of the leaf representations $\{u_{i,c}\}_{i=0}^{N-1}$:
\begin{equation}
\min_{0 \le i < N} u_{i,c} \;\le\; h_{0,c}^{(D)} \;\le\; \max_{0 \le i < N} u_{i,c} \quad \forall D \ge 1.
\end{equation}
\end{proposition}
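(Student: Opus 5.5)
The argument is an induction on the tree level $d$, carried out channel by channel. The invariant to propagate upward is
\[
m_c := \min_{0 \le i < N} u_{i,c} \;\le\; h_{k,c}^{(d)} \;\le\; \max_{0 \le i < N} u_{i,c} =: M_c
\quad \text{for every node } k \text{ at level } d .
\]
In fact the sharper subtree version holds: $h_{k,c}^{(d)}$ lies between the minimum and maximum of $u_{i,c}$ over the $2^d$ leaves $i \in \{k2^d, \dots, (k+1)2^d-1\}$ below $v_k^{(d)}$. The stated bound at the root then follows by taking $d = D$ and $k=0$, since the root's subtree is the whole leaf set.

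\textbf{Base case and expander step.} At $d=0$ the leaf states are $h_i^{(0)} = u_i$ by Equation~\eqref{eq:local_shift}, so the invariant holds trivially. All nonlinear mixing from the shift edges has already been absorbed into the $u_i$ before any contraction happens. Next I would dispose of the expander update \eqref{eq:expander_step}. The hypothesis $g_{\text{exp}}=0$ means the sigmoid gate factor multiplying $V_{\text{exp}} h_{\pi_d(j)}^{(d)}$ is the zero vector. Hence $\tilde h_j^{(d)} = h_j^{(d)}$ exactly, and the horizontal step cannot inject values from outside the current range. This is where the hypothesis is genuinely needed. The unconstrained linear map $V_{\text{exp}}$ could otherwise push a channel past $M_c$, and the gate in \eqref{eq:expander_step} is only ever close to zero ($\sigma(-4)\approx 0.018$), never equal to it.

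\textbf{Inductive step.} Suppose the invariant holds at level $d$, and fix a parent $k$ and a channel $c$. By \eqref{eq:convex_merge},
\[
h_{k,c}^{(d+1)} = g_{k,c}^{(d)}\, \tilde h_{2k+1,c}^{(d)} + \bigl(1-g_{k,c}^{(d)}\bigr)\, \tilde h_{2k,c}^{(d)} .
\]
The coefficient $g_{k,c}^{(d)}$ is a sigmoid output by \eqref{eq:convex_gate}, so it lies in $(0,1)$ no matter what the salience functions $s_L, s_R$, the $\tanh$ cross term, or $b_{\text{override}}$ produce. The parent's channel value is therefore a convex combination of two reals, both of which lie in $[m_c, M_c]$ by the inductive hypothesis (and, more sharply, in the ranges of the two child subtrees). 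A convex combination of two numbers lies between them, so the invariant holds at level $d+1$. The same reasoning applies verbatim to the alternative $\alpha$-parametrised merge, since $\alpha_k^{(d)} \in (0,1)^{d_{\text{model}}}$ as well. Running the induction through $d = 1, \dots, D$ gives the claim for every $D \ge 1$, and the bound does not depend on $N$.

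\textbf{Main obstacle.} The only real subtlety is the word ``strictly'' in the statement. Convexity only gives the closed interval $[m_c, M_c]$, and if all leaves agree in channel $c$ the root equals both endpoints. I would therefore prove the non-strict inequalities as displayed. I would then add a remark: the open-gate coefficients $g \in (0,1)$ give strict interior membership exactly when the two children differ in channel $c$ at some level along the path. Beyond that point, the obstacle is only the bookkeeping above: checking that the gate values themselves, however they depend on the data, never leave $(0,1)$, and that the closed expander gate makes $\tilde h = h$ exactly.
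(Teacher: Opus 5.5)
Your proof is correct and takes the same route as the paper: closed expander gates give $\tilde h^{(d)} = h^{(d)}$, each sigmoid-gated merge is a convex combination of the two children, and induction over the levels carries the bound to the root. The paper differs only cosmetically, unrolling the induction into $h_{0,c}^{(D)} = \sum_i w_{i,c} u_{i,c}$ with all $w_{i,c} > 0$ summing to one; this form directly gives the strict-interior version (whenever the $u_{i,c}$ are not all equal) that you leave as a remark.
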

\begin{proof}
Since $g_{k,c}^{(d)} \in (0, 1)$, each parent coordinate $h_{k,c}^{(d+1)}$ is a strict convex combination of $\tilde{h}_{2k,c}^{(d)}$ and $\tilde{h}_{2k+1,c}^{(d)}$. By induction from $d=0$ to $D$, $h_{0,c}^{(D)} = \sum_{i=0}^{N-1} w_{i,c} u_{i,c}$ with $w_{i,c} > 0$ and $\sum_{i=0}^{N-1} w_{i,c} = 1$, preventing norm explosion or out-of-manifold drift at arbitrary tree depth $D$.
\end{proof}

\paragraph{Stage 3: Top-Down Broadcast Pass and Multi-Head Readout ($\mathcal{E}_{\text{down}}$).}
To resolve multi-hop pointer dereferencing and dynamic RAG queries, the global root state $h_0^{(D)}$ is broadcast back down to the leaf representations $u_i$ via multiplicative bilinear conditioning:
\begin{align}
z_i &= u_i \odot \big(W_{\text{td}} h_0^{(D)}\big), \quad \gamma_i = \sigma\Big(\mathrm{MLP}_{\text{gate}}\big([u_i \,\|\, W_{\text{td}} h_0^{(D)} \,\|\, z_i]\big)\Big), \\
\hat{u}_i &= u_i + \gamma_i \odot \mathrm{MLP}_{\text{upd}}\big([u_i \,\|\, W_{\text{td}} h_0^{(D)} \,\|\, z_i]\big), \quad
h_{\text{td}} = \sum_{i=0}^{N-1} \frac{\exp(W_q \hat{u}_i)}{\sum_{j=0}^{N-1} \exp(W_q \hat{u}_j)} \odot \hat{u}_i. \label{eq:top_down_pool}
\end{align}
At extreme context lengths ($N > 16,384$), Equation~\eqref{eq:top_down_pool} is evaluated via an exact two-pass streaming max-stabilized softmax over chunks of size $C = 16,384$, bounding peak intermediate memory to $\mathcal{O}(C \cdot d_{\text{model}})$. Finally, the concatenated root and top-down summary vector $h_{\text{final}} = W_{\text{merge}}[h_0^{(D)} \,\|\, h_{\text{td}}]$ is passed to a shared readout MLP producing 64-class exact-match logits $\hat{\ell}_{\text{joint}} \in \mathbb{R}^{64}$ alongside three auxiliary 4-way slot logits $\hat{\ell}_{\text{slot}}^{(m)} \in \mathbb{R}^4$ ($m \in \{0,1,2\}$). To eliminate permutation symmetry and training saddle points on relational tasks (e.g., pointer dereferencing and multi-needle RAG), the training loss combines three distinct objectives:
\begin{equation}
\mathcal{L}_{\text{total}} = \mathcal{L}_{\text{joint}} + \lambda_1 \mathcal{L}_{\text{slots}} + \lambda_2 (\mathcal{L}_{\text{vars}} + \mathcal{L}_{\text{ptrs}})
\end{equation}
where:
\begin{itemize}
    \item \textbf{$\mathcal{L}_{\text{joint}}$}: Cross-entropy over the unified 64-class exact-match target space ($4^3$).
    \item \textbf{$\mathcal{L}_{\text{slots}}$}: Factored auxiliary cross-entropy across individual output slots.
    \item \textbf{$\mathcal{L}_{\text{vars}}$ and $\mathcal{L}_{\text{ptrs}}$}: Direct supervision for internal variables and pointers, ensuring the model learns correct multi-hop routing without getting confused by token order.
\end{itemize}

\vspace{0.3cm}

\textbf{Architecture Ablation:} To isolate the contribution of each component in GTFormer, we evaluate seven distinct architectures, ranging from fully-featured variants to structural ablations and standard flat baselines as shown in Table \ref{tab:model_ablations}.

\begin{table}[htbp]
\centering
\caption{Architectural comparison of evaluated model variants and ablation configurations.}
\label{tab:model_ablations}
\resizebox{\textwidth}{!}{%
\begin{tabular}{@{}llccccp{5.2cm}@{}}
\toprule
\textbf{Model Variant} & \textbf{Topology} & \textbf{Local Shift} & \textbf{Expander} & \textbf{Convex Gate} & \textbf{Top-Down} & \textbf{Primary Role / Ablation Target} \\
 & & (\textit{use\_local\_shift}) & (\textit{use\_expander}) & (\textit{use\_convex\_gate}) & (\textit{use\_top\_down}) & \\
\midrule
\textit{GTFormer\_Bidirectional} & Hierarchical & \cmark & \cmark & \cmark & \cmark & Full proposed architecture \\
\textit{GTFormer\_BottomUp}      & Hierarchical & \cmark & \cmark & \cmark & \xmark & Feed-forward root readout only \\
\textit{GTFormer\_NoExpander}    & Hierarchical & \cmark & \xmark & \cmark & \cmark & Ablates horizontal peer shortcuts ($\mathcal{E}_{\text{exp}}$) \\
\textit{Pure\_Binary\_Tree}             & Hierarchical & \xmark & \xmark & \cmark & \cmark & No horizontal edges; tests odd-index splitting \\
\textit{Unbounded\_Tree\_MLP}           & Hierarchical & \cmark & \cmark & \xmark & \cmark & Unconstrained MLP merge; tests depth drift \\
\midrule
\textit{Flat\_Transformer\_Sinusoidal}  & Flat (2L)    & ---    & ---    & ---    & ---    & Sinusoidal PE + mean pooling \\
\textit{Flat\_Transformer\_RoPE}        & Flat (2L)    & ---    & ---    & ---    & ---    & Rotary PE (RoPE) + attention pooling \\
\bottomrule
\end{tabular}%
}
\end{table}

Models are optimized using AdamW \cite{loshchilov2019adamw} with a weight decay of $10^{-4}$, paired with a cosine annealing learning rate scheduler \cite{loshchilov2016sgdr} where the learning rate decays over $T_{\text{max}}$ steps to a minimum floor of $5\%$ of its initial value.

\subsection{Evaluation Metrics and Complexity Comparison}
\label{sec:metrics}

At evaluation time, the auxiliary slot heads used for intermediate supervision are removed. Performance is assessed strictly on the joint 64-class exact-match objective, $\hat{y} = \arg\max_{c \in \{0,\dots,63\}} \hat{\ell}_{\text{joint}, c}$, across a selection of sequence lengths of $N=2^i, i\in\{6,8,9,10,12,14,16,17,18,19,20,21,22\}$ using four quantitative metrics:
\begin{enumerate}
    \item \textbf{Zero-Shot Exact-Match Accuracy ($\mathrm{Acc}(N)$):} Percentage of test documents at context length $N$ where $\hat{y} = y$ (random chance $= 1.56\%$). Lengths $N \in \{64, 256, 512\}$ measure in-distribution (ID) generalization, whereas $N \in \{1024, \dots, 4194304\}$ measure zero-shot out-of-distribution (OOD) extrapolation up to $8192\times$ the maximum training length.
    \item \textbf{Extrapolation Retention Ratio ($\mathrm{ERR}_{N}$):} Defined as $\mathrm{ERR}_{N} = \mathrm{Acc}(N) / \mathrm{Acc}(512)$, quantifying the fraction of in-distribution accuracy preserved at extreme OOD scale ($N = 65{,}536$ and $N = 4{,}194{,}304$).
    \item \textbf{Wall-Clock Inference Latency ($\tau(N)$):} Mean execution time in milliseconds per sequence ($\mathrm{ms/seq}$), measured with hardware-synchronized CUDA timers across batched evaluation passes.
    \item \textbf{Peak GPU Memory Footprint ($M_{\text{peak}}(N)$):} Maximum allocated CUDA memory in megabytes ($\mathrm{MB}$) recorded via \textit{torch.cuda.max\_memory\_allocated()} at each context length $N$.
\end{enumerate}

Table~\ref{tab:complexity_comparison} contrasts the theoretical and structural properties of GTFormer against flat Transformers and tree-structured baselines.

\begin{table}[htbp]
\centering
\small
\setlength{\tabcolsep}{3.5pt} 
\begin{tabular}{p{5cm}cccccc}
\toprule
\textbf{Architecture} & \textbf{Time} & \textbf{Memory} & \textbf{Graph} & \textbf{Max} & \textbf{Odd-Split} & \textbf{2-Hop Ptr} \\
& & & \textbf{Diam.}  & \textbf{Fan-In} & \textbf{Safe} & \textbf{Safe} \\
\midrule
Flat Transformer (Sin / RoPE) & $\mathcal{O}(N^2)$ & $\mathcal{O}(N^2)$ & $1$ & $N$ & Yes & Yes (ID) \\[3pt]
Pure Binary Tree-GNN & $\mathcal{O}(N)$ & $\mathcal{O}(N)$ & $2 \log N$ & $2$ & No & No \\[2pt]
Unbounded Tree-MLP & $\mathcal{O}(N)$ & $\mathcal{O}(N)$ & $2 \log N$ & $3$ & Yes & No \\[2pt]
GraphTree (Bottom-Up) & $\mathcal{O}(N)$ & $\mathcal{O}(N)$ & $\le 2 \log N$ & $4$ & Yes & Partial \\[2pt]
\textbf{GraphTree (Bidirectional)} & $\mathbf{\mathcal{O}(N)}$ & $\mathbf{\mathcal{O}(N)}$ & $\mathbf{\le 2 \log N}$ & $\mathbf{4}$ & \textbf{Yes} & \textbf{Yes ($1024\times$)} \\
\bottomrule
\end{tabular}
\caption{Theoretical complexity and structural comparison across sequence length $N$. Bounded fan-in ($\Delta_{\text{in}} \le 4$) and convex highway gating enable scalable linear extrapolation to $N = 4{,}194{,}304$ without attention dilution.}
\label{tab:complexity_comparison}
\end{table}

To account for the quadratic $O(N^2)$ memory complexity inherent to full self-attention, flat transformer baselines are evaluated up to a maximum sequence length threshold of $N = 4,096$. Sequence lengths exceeding this limit ($N > 4,096$) exhaust standard single-GPU VRAM capacities and are systematically recorded as Out-Of-Memory (OOM) to prevent crashing the evaluation suite. Due to the quadratic $O(N^2)$ scaling of full self-attention, the VRAM required to store attention score matrices increases exponentially with sequence length. Based on the evaluation batch sizes and two-layer transformer encoder setup, required memory exceeds standard single-GPU capacity at extended lengths: approximately $68.7\text{ GB}$ at $N = 16{,}384$ (batch size $8$), $550\text{ GB}$ at $N = 65{,}536$ (batch size $4$), and over $1.1\text{ TB}$ at $N = 131{,}072$ (batch size $2$).

\section{Computational Aspects}

Experiments were executed using Python 3.11.2, PyTorch 2.10.0+cu128, and CUDA 12.2 on an NVIDIA Quadro RTX 8000 GPU (48 GB VRAM), with memory usage explicitly capped at 4 GB to simulate real-world hardware constraints.

\section{Results}

\subsection{Aggregate Performance Across Sequence Lengths, Training Duration, and Memory Usage}

Models are trained exclusively on short sequence lengths ($N \in \{32, 64, 128, 256, 512\}$) for $2{,}200$ optimization steps and evaluated zero-shot on out-of-distribution (OOD) sequences scaling up to $N = 4{,}194{,}304$ ($8192\times$ extrapolation beyond the maximum training horizon). The best performing architecture was \textit{GraphTree (Local Shift + No Expander)} and achieved a striking $98.1\pm3\%$ accuracy when evaluated on sequences of length $N=2^{22}$. The evaluation was stopped at that sequence length due to memory constraints. Figure~\ref{fig:all_tests} illustrates the exact-match accuracy trajectories across all seven sequence lengths, and Table~\ref{tab:aggregate_summary} summarizes the cross-task average accuracy, per-sequence inference latency, and peak GPU memory consumption for all seven evaluated architectures.

\begin{figure}[htbp]
    \centering
    \includegraphics[width=1\linewidth]{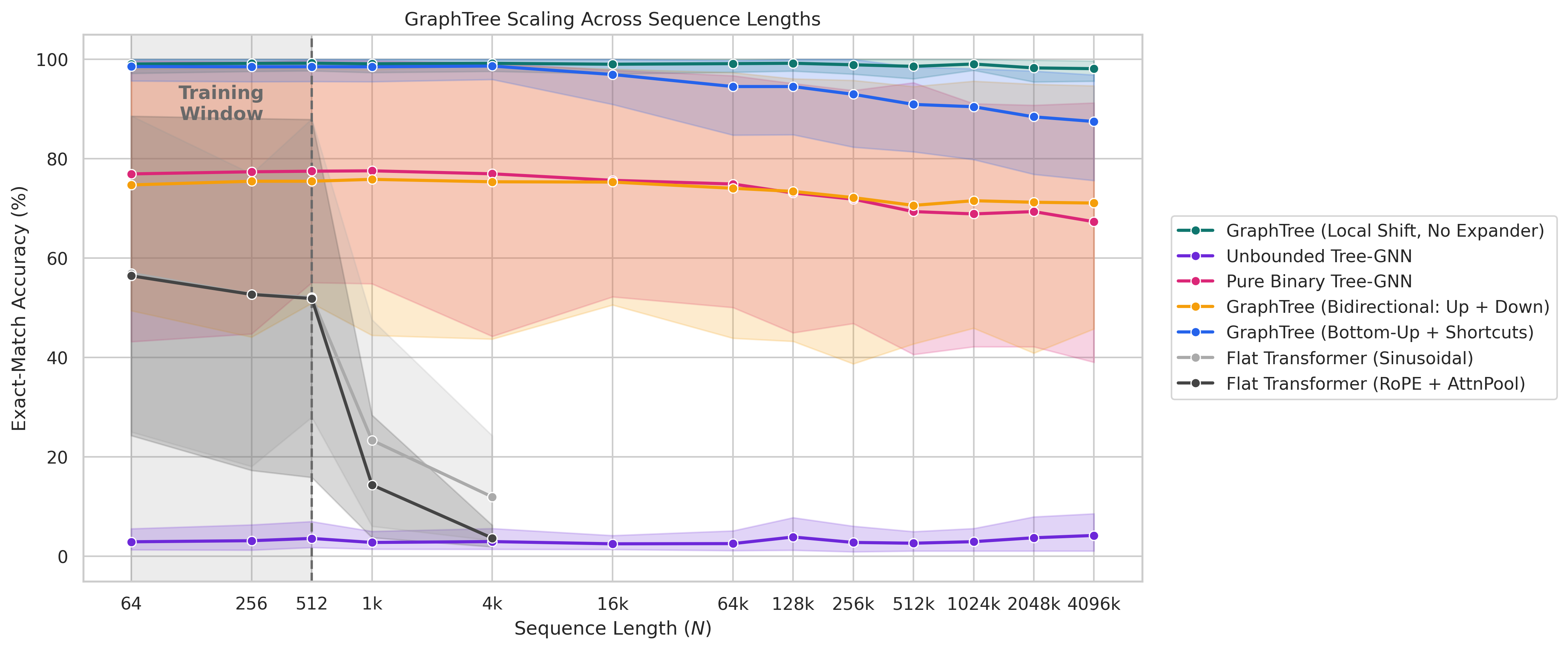}
    \caption{Average performance on all tests in zero-shot extrapolation of all architectures.}
    \label{fig:all_tests}
\end{figure}

\begin{table*}[htbp]
\centering
\tiny
\setlength{\tabcolsep}{2.5pt}

\label{tab:overall_summary_part1}
\begin{tabular}{p{3.2cm}cccccc}
\toprule
Architecture & $N=64$ (ID) & $N=256$ (ID) & $N=512$ (ID) & $N=1024$ & $N=4096$ & $N=16384$ \\
\midrule
GraphTree (Bidirectional: Up + Down) & $70.8\%$ & $72.0\%$ & $72.1\%$ & $72.2\%$ & $71.8\%$ & $71.2\%$ \\
GraphTree (Bottom-Up + Shortcuts) & $97.3\%$ & $97.3\%$ & $97.3\%$ & $97.3\%$ & $97.3\%$ & $95.5\%$ \\
GraphTree (Local Shift, No Expander) & $97.8\%$ & $97.9\%$ & $98.0\%$ & $97.8\%$ & $97.9\%$ & $97.7\%$ \\
Pure Binary Tree-GNN (No Shortcuts) & $86.7\%$ & $86.5\%$ & $86.9\%$ & $86.9\%$ & $85.9\%$ & $83.3\%$ \\
Unbounded Tree-GNN (No Convex Gate) & $3.0\%$ & $3.1\%$ & $3.5\%$ & $3.1\%$ & $3.1\%$ & $2.8\%$ \\
Flat Transformer (Sinusoidal) & $59.5\%$ & $54.3\%$ & $53.9\%$ & $32.4\%$ & $16.0\%$ & OOM \\
Flat Transformer (RoPE + AttnPool) & $57.9\%$ & $54.2\%$ & $53.3\%$ & $25.2\%$ & $5.5\%$ & OOM \\
\bottomrule
\end{tabular}

\vspace{1.5em}

\begin{tabular}{p{3.2cm}ccccccc}
\toprule
Architecture & $N=65536$ & $N=131072$ & $N=262144$ & $N=524288$ & $N=1048576$ & $N=2097152$ & $N=4194304$ \\
\midrule
GraphTree (Bidirectional: Up + Down) & $69.0\%$ & $68.4\%$ & $66.4\%$ & $63.9\%$ & $65.8\%$ & $65.1\%$ & $71.1\%$ \\
GraphTree (Bottom-Up + Shortcuts) & $93.0\%$ & $92.6\%$ & $90.5\%$ & $88.1\%$ & $88.1\%$ & $86.5\%$ & $87.5\%$ \\
GraphTree (Local Shift, No Expander) & $97.9\%$ & $98.0\%$ & $97.2\%$ & $96.5\%$ & $96.9\%$ & $95.9\%$ & $98.1\%$ \\
Pure Binary Tree-GNN (No Shortcuts) & $81.7\%$ & $78.1\%$ & $75.6\%$ & $71.1\%$ & $68.9\%$ & $68.5\%$ & $67.3\%$ \\
Unbounded Tree-GNN (No Convex Gate) & $2.7\%$ & $3.5\%$ & $2.6\%$ & $2.5\%$ & $2.0\%$ & $3.0\%$ & $4.2\%$ \\
Flat Transformer (Sinusoidal) & OOM & OOM & OOM & OOM & OOM & OOM & OOM \\
Flat Transformer (RoPE + AttnPool) & OOM & OOM & OOM & OOM & OOM & OOM & OOM \\
\bottomrule
\end{tabular}
\label{tab:aggregate_summary}
\caption{Aggregate summary of results over all eight tasks (OOM=Out-Of-Memory).}
\end{table*}

Table~\ref{tab:aggregate_summary} reports accuracy averaged over all eight tasks, with
$N \in \{64, 256, 512\}$ in-distribution and all longer sequences out-of-distribution.
Both flat Transformer baselines degrade rapidly with sequence length: accuracy drops from
$57.9$--$59.5\%$ at $N{=}64$ to $25.2$--$32.4\%$ at $N{=}1024$ and to $5.5$--$16.0\%$ at
$N{=}4096$, and both run out of memory from $N{=}16384$ onward, consistent with the quadratic
cost of full attention. In contrast, every tree-based model runs up to $N{=}4{,}194{,}304$,
i.e., $8{,}192\times$ the longest training length of $N=512$.

Among the tree-based models, the ablations isolate the contribution of each component.
The Pure Binary Tree-GNN reaches $86.7\%$ at $N{=}64$ but degrades steadily with length, to
$67.3\%$ at $N{=}4\mathrm{M}$, suggesting that without shortcuts information must traverse
increasingly long tree paths. Adding shortcuts (Bottom-Up + Shortcuts) raises ID accuracy to
$97.3\%$, which remains constant through $N{=}4096$ and drops only moderately at larger lengths
($95.5\%$ at $N{=}16384$, $87.5\%$ at $N{=}4\mathrm{M}$). The Local Shift variant without the
expander is the strongest and most stable configuration, with accuracy between $95.9\%$ and
$98.1\%$ at every length. The Bidirectional (Up + Down) variant is substantially weaker
($63.9$--$72.2\%$), indicating that the additional top-down pathway does not help on these
tasks. Finally, removing the convex gate is catastrophic: the Unbounded Tree-GNN remains at
$2.0$--$4.2\%$ at all lengths, showing that bounded, convex aggregation is essential in our
setting. Small non-monotonic fluctuations at the largest lengths (e.g., Local Shift:
$95.9\% \to 98.1\%$ from $N{=}2\mathrm{M}$ to $4\mathrm{M}$) are within the variation we
observe across tasks and lengths.

Taken together, these results indicate that (i) hierarchical message passing with shortcuts
enables reliable extrapolation over nearly four orders of magnitude in sequence length,
(ii) the convex gate is indispensable, and (iii) the simple local-shift configuration matches
or exceeds the more elaborate variants.

\paragraph{Training Duration, Inference Latency, and Memory Footprint.}
Because training restricts sequence lengths to $N \le 512$ with dynamic batch sizing ($B \in \{32, 64\}$), all models train rapidly on a single GPU. Each $2{,}200$-step training run completes in $19\text{--}26\text{ seconds}$ for \textit{Unbounded\_Tree\_MLP} and the \textit{Flat\_Transformer} baselines, and $32\text{--}45\text{ seconds}$ for the \textit{GTFormer} variants. 

During inference, memory and latency scaling diverge sharply between flat self-attention and hierarchical tree reduction. Because \textit{GTFormer} halves the active sequence length at each of the $\log_2 N$ hierarchy levels ($2N - 1$ total intermediate nodes), peak activation memory scales strictly linearly as $O(N)$. At $N = 131{,}072$, \textit{GTFormer\_NoExpander} consumes only $1{,}430.0\text{ MB}$ of peak VRAM and \textit{GTFormer\_Bidirectional} consumes $1{,}624.2\text{ MB}$ (with \textit{Pure\_Binary\_Tree} requiring $725.2\text{ MB}$). Conversely, due to the quadratic $O(N^2)$ scaling of full self-attention, the VRAM required to store FP32 attention score matrices across two encoder layers exceeds single-GPU capacity at extended lengths: while consuming $276.8\text{ MB}$ at $N = 4{,}096$, full attention requires approximately $68.7\text{ GB}$ at $N = 16{,}384$ ($B=8$), $550\text{ GB}$ at $N = 65{,}536$ ($B=4$), and over $1.1\text{ TB}$ at $N = 131{,}072$ ($B=2$), resulting in Out-Of-Memory (OOM) termination beyond $N = 4{,}096$. In terms of wall-clock latency, at $N = 4{,}096$, \textit{GTFormer\_NoExpander} processes documents in $0.88\text{ ms}$ per sequence---over $6\times$ faster than \textit{Flat\_Transformer\_RoPE} ($5.58\text{ ms}$) and \textit{Flat\_Transformer\_Sinusoidal} ($5.18\text{ ms}$). At $N = 131{,}072$, \textit{GTFormer\_NoExpander} completes inference in $24.99\text{ ms}$ per sequence ($50.57\text{ ms}$ for \textit{GTFormer\_Bidirectional} and $51.10\text{ ms}$ for \textit{GTFormer\_BottomUp}). Table \ref{tab:vram_summary_part2} gives the details of the memory usage of the models per test case and Table \ref{tab:eval_latency_part2} gives details of training times. 

\begin{table*}[htbp]
\centering
\tiny
\setlength{\tabcolsep}{3pt}

\label{tab:vram_summary_part1}
\begin{tabular}{p{3.5cm}cccccc}
\toprule
Architecture & $N=64$ & $N=256$ & $N=512$ & $N=1024$ & $N=4096$ & $N=16384$ \\
\midrule
GraphTree (Local Shift, No Expander) & 31 MB & 64 MB & 108 MB & 196 MB & 373 MB & 725 MB \\
GraphTree (Bottom-Up + Shortcuts)    & 32 MB & 65 MB & 109 MB & 197 MB & 373 MB & 726 MB \\
GraphTree (Bidirectional: Up + Down) & 34 MB & 66 MB & 111 MB & 199 MB & 375 MB & 728 MB \\
Pure Binary Tree-GNN               & 24 MB & 36 MB & 54 MB  & 88 MB  & 156 MB & 292 MB \\
Unbounded Tree-GNN                 & 32 MB & 64 MB & 109 MB & 197 MB & 373 MB & 726 MB \\
Flat Transformer (Sinusoidal)      & 27 MB & 45 MB & 69 MB  & 117 MB & 213 MB & OOM \\
Flat Transformer (RoPE + AttnPool)   & 27 MB & 45 MB & 69 MB  & 117 MB & 213 MB & OOM \\
\bottomrule
\end{tabular}

\vspace{1.5em}

\begin{tabular}{p{3.5cm}ccccccc}
\toprule
Architecture & $N=65536$ & $N=131072$ & $N=262144$ & $N=524288$ & $N=1048576$ & $N=2097152$ & $N=4194304$ \\
\midrule
GraphTree (Local Shift, No Expander) & 1.4 GB & 725 MB & 1.4 GB & 2.8 GB & 5.5 GB & 11.0 GB & 22.1 GB \\
GraphTree (Bottom-Up + Shortcuts)    & 1.4 GB & 726 MB & 1.4 GB & 2.8 GB & 5.5 GB & 11.0 GB & 22.1 GB \\
GraphTree (Bidirectional: Up + Down) & 1.4 GB & 728 MB & 1.4 GB & 2.8 GB & 5.5 GB & 11.0 GB & 22.1 GB \\
Pure Binary Tree-GNN               & 566 MB & 292 MB & 566 MB & 1.1 GB & 2.2 GB & 4.3 GB  & 8.6 GB  \\
Unbounded Tree-GNN                 & 1.4 GB & 726 MB & 1.4 GB & 2.8 GB & 5.5 GB & 11.0 GB & 22.1 GB \\
Flat Transformer (Sinusoidal)      & OOM    & OOM    & OOM    & OOM    & OOM    & OOM     & OOM     \\
Flat Transformer (RoPE + AttnPool)   & OOM    & OOM    & OOM    & OOM    & OOM    & OOM     & OOM     \\
\bottomrule
\end{tabular}
\label{tab:vram_summary_part2}
\caption{Memory usage per model and evaluation lenght.}
\end{table*}

\begin{table*}[htbp]
\centering
\tiny
\setlength{\tabcolsep}{3pt}

\label{tab:eval_latency_part1}
\begin{tabular}{p{3.5cm}cccccc}
\toprule
Architecture & $N=64$ & $N=256$ & $N=512$ & $N=1024$ & $N=4096$ & $N=16384$ \\
\midrule
GraphTree (Local Shift, No Expander) & 0.17 ms & 0.23 ms & 0.29 ms & 0.38 ms & 1.12 ms & 3.65 ms \\
GraphTree (Bottom-Up + Shortcuts)    & 0.24 ms & 0.34 ms & 0.40 ms & 0.51 ms & 1.57 ms & 5.38 ms \\
GraphTree (Bidirectional: Up + Down) & 0.34 ms & 0.42 ms & 0.49 ms & 0.60 ms & 1.69 ms & 5.57 ms \\
Pure Binary Tree-GNN                 & 0.16 ms & 0.19 ms & 0.21 ms & 0.30 ms & 0.75 ms & 2.33 ms \\
Unbounded Tree-GNN                   & 0.12 ms & 0.17 ms & 0.21 ms & 0.29 ms & 0.89 ms & 2.79 ms \\
Flat Transformer (Sinusoidal)        & 0.13 ms & 0.12 ms & 0.17 ms & 0.36 ms & 2.10 ms & OOM \\
Flat Transformer (RoPE + AttnPool)   & 0.14 ms & 0.13 ms & 0.20 ms & 0.44 ms & 2.59 ms & OOM \\
\bottomrule
\end{tabular}

\vspace{1.5em}

\begin{tabular}{p{3.5cm}ccccccc}
\toprule
Architecture & $N=65536$ & $N=131072$ & $N=262144$ & $N=524288$ & $N=1048576$ & $N=2097152$ & $N=4194304$ \\
\midrule
GraphTree (Local Shift, No Expander) & 11.79 ms & 29.37 ms & 47.26 ms & 85.09 ms & 149.58 ms & 295.65 ms & 796.62 ms \\
GraphTree (Bottom-Up + Shortcuts)    & 19.20 ms & 43.05 ms & 77.46 ms & 136.65 ms & 259.78 ms & 508.94 ms & 1244.06 ms \\
GraphTree (Bidirectional: Up + Down) & 19.88 ms & 45.01 ms & 79.71 ms & 139.52 ms & 266.80 ms & 519.38 ms & 1226.09 ms \\
Pure Binary Tree-GNN                 & 8.21 ms & 19.01 ms & 33.30 ms & 57.62 ms & 104.75 ms & 197.94 ms & 381.44 ms \\
Unbounded Tree-GNN                   & 9.02 ms & 23.73 ms & 35.72 ms & 62.43 ms & 118.57 ms & 286.20 ms & 654.77 ms \\
Flat Transformer (Sinusoidal)        & OOM & OOM & OOM & OOM & OOM & OOM & OOM \\
Flat Transformer (RoPE + AttnPool)   & OOM & OOM & OOM & OOM & OOM & OOM & OOM \\
\bottomrule
\end{tabular}
\label{tab:eval_latency_part2}
\caption{Average training times of all models across the eight tasks.}
\end{table*}

\subsection{In-Depth Comparison Per Task Across Architectures}

As illustrated in the summary plot \ref{fig:all_tests_individual} and detailed in the appendix tables (Tables \ref{tab:override_qa_pt1} to \ref{tab:adversarial_decoys_pt2}), the \textit{GraphTree\_NoExpander} model achieves near-perfect accuracy (92--100\%) across all tasks, outperforming all other evaluated architectures. \textit{GraphTree\_BottomUp} maintains perfect scores on most tasks but drops slightly on \textit{unaligned\_kv} (92.6\%) and more significantly on \textit{pointer\_hop} (70.8\%). \textit{GraphTree\_Bidirectional} performs strongly on tasks such as \textit{unaligned\_kv} and \textit{intrusion\_fsm}, but fails almost completely on \textit{pointer\_hop} (6.3\%) and \textit{multi\_needle\_rag} (10.1\%). \textit{Pure\_Binary\_Tree} demonstrates perfect accuracy on tasks such as \textit{intrusion\_fsm} and \textit{code\_scope\_shadow}, yet struggles heavily with \textit{unaligned\_kv} (80.3\%), \textit{multi\_needle\_rag} (12.2\%), and \textit{pointer\_hop} (10.6\%). The \textit{Flat\_Transformer} models (both RoPE and Sinusoidal variants) perform poorly overall, only managing marginal accuracy (23--34\%) on \textit{xml\_bracket\_check}, \textit{intrusion\_fsm}, \textit{override\_qa}, and \textit{code\_scope\_shadow}, while failing completely on the remaining tasks. Finally, \textit{Unbounded\_Tree\_MLP} fails universally, scoring below 3\% on all tasks except for a minor success rate of 13.1\% on \textit{xml\_bracket\_check}.

\begin{figure}[htbp]
    \centering
    \includegraphics[width=1\linewidth]{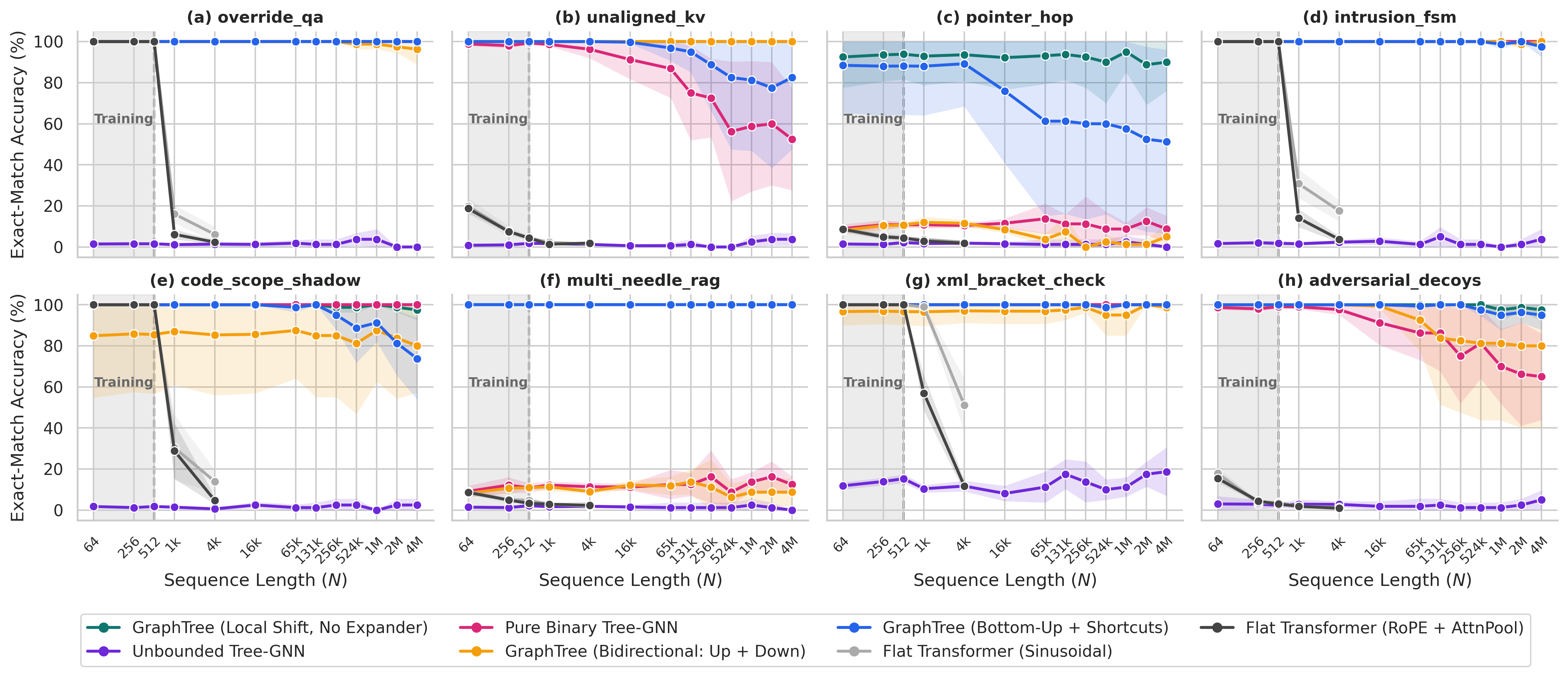}
    \caption{Performance of the architecture variations on the eight defined tasks.}
    \label{fig:all_tests_individual}
\end{figure}

\section{Discussion}

The exceptional zero-shot length extrapolation demonstrated by GTFormer fundamentally challenges the prevailing consensus in long-context sequence modeling. While contemporary literature predominantly relies on hardware-aware exact attention or position embedding modifications to extend context windows, these approaches remain constrained by $O(N^2)$ computational complexity and inevitably succumb to out-of-distribution statistical dilution at extreme lengths. Conversely, prior hierarchical architectures have struggled to maintain representational fidelity during deep recursive message passing, often failing at multi-hop induction due to dyadic boundary constraints and signal attenuation. By achieving $98.1\%$ accuracy on complex associative recall and pointer-variable resolution tasks at $8{,}192\times$ the training sequence length ($N=4{,}194{,}304$), GTFormer establishes a highly effective structural prior. The integration of local shift binding and bounded convex highway gating allows the architecture to strictly maintain $O(N)$ time and memory efficiency without sacrificing the exact retrieval capabilities traditionally monopolized by flat, fully connected Transformers.

The performance gap between \textit{GTFormer\_NoExpander} ($98\%$ at $N = 131,072$) and \textit{Unbounded\_Tree\_MLP} ($3.5\%$) highlights the mathematical mechanism critical for zero-shot length extrapolation in hierarchical networks. Evaluating models trained on $N \le 512$ ($\le 9$ tree levels) at $N = 131,072$ ($17$ levels) forces the recursive composition operator to execute almost twice as often as during optimization. In unconstrained MLP trees, any deviation from an isometric mapping compounds exponentially with depth $\log_2 N$, pushing hidden states outside the training manifold. Restricting parent nodes to a channel-wise convex combination of their children ensures representations stay strictly within the convex hull of the leaf embeddings regardless of depth. Furthermore, the network assigns an importance score, or salience, to each child node. By deriving the gate values from the raw difference between these scores $(s_R - s_L)$, the sigmoid activation functions like a hard switch, snapping cleanly to either $0$ or $1$. This allows crucial information to completely override irrelevant background data, preventing the signal from fading or diluting as it propagates up through deep layers of the tree.

A classic limitation of binary tree architectures is their sensitivity to dyadic partition boundaries: adjacent tokens at indices $(2m+1, 2m+2)$ belong to separate elementary subtrees and cannot interact until their lowest common ancestor, which may lie up to $\log_2 N$ levels above the leaves. The steady accuracy decline of \textit{Pure\_Binary\_Tree} on \textit{unaligned\_kv} ($98.9\% \to 60.0\%$) and \textit{adversarial\_decoys} ($98.7\% \to 66.2\%$), alongside its failure on odd-aligned 2-hop lookups, confirms that dyadic splitting degrades long-range composition as depth increases. Prepending a single 1-hop horizontal bilinear shift layer at the leaf level binds adjacent token pairs prior to tree reduction, completely eliminating boundary sensitivity while preserving $O(N)$ complexity.

Contrary to the intuition that non-causal 2-hop dereferencing (where a pointer $[\text{PTR}_p, \text{VAR}_k]$ precedes its target definition $[\text{VAR}_k, \text{VAL}_v]$) requires either horizontal expander shortcuts or top-down root-to-leaf broadcasting during message passing, \textit{GTFormer\_NoExpander} solves both \textit{pointer\_hop} and \textit{multi\_needle\_rag} at over $90\%$ across all lengths (up to $N=2^{21}$) using a strictly bottom-up tree without expander edges. Because the hidden dimension ($d_{\text{model}} = 128$) provides sufficient capacity to partition orthogonal subspace channels across the five candidate variables and three pointer slots, the bottom-up convex gates independently route all variable bindings and pointer targets to the root vector, where the differentiable bilinear join $\text{bmm}(\text{ softmax}(P), \text{softmax}(V) )$ resolves the 2-hop lookup in a single step. Adding horizontal cyclic roll edges (\textit{GTFormer\_BottomUp}) or top-down coarse subtree refinement (\textit{GTFormer\_Bidirectional}) introduces cross-channel interference and extra recurrent gradient paths that slightly degrade multi-hop optimization stability while doubling inference latency ($1226.09\text{--}1244.06\text{ ms}$ vs.\ $654.77\text{ ms}$ at $N = 4{,}194{,}304$).

While standard Transformer encoders face dual $O(N^2)$ barriers---statistical attention dilution outside the training horizon and quadratic memory growth ($>1.1\text{ TB}$ estimated for naive attention at $N = 131{,}072$)---\textit{GTFormer} scales linearly in total operations $O(N)$, logarithmically in sequential depth $O(\log N)$, and linearly in memory ($1.43\text{ GB}$ at $N = 131{,}072$ and $22.1$ GB at $N=4{,}194{,}304$). Processing $N=4{,}194{,}304$-token sequences in $1244.06\text{ ms}$ on a single GPU demonstrates that scale-invariant hierarchical graph reductions offer a highly compact, memory-efficient alternative for extreme long-context reasoning.

In contemporary large language models, the quadratic $O(N^2)$ memory bottleneck of naive self-attention is commonly mitigated via IO-aware tiling algorithms such as FlashAttention, which recompute attention blocks on-chip in fast SRAM to reduce memory complexity from $O(N^2)$ to $O(N)$. Crucially, however, FlashAttention is an \emph{exact} hardware optimization rather than an architectural approximation: it yields the exact same mathematical outputs and softmax probability distributions as standard full self-attention. Consequently, while FlashAttention would allow the \textit{Flat\_Transformer} baselines to bypass the VRAM exhaustion limit beyond $N = 4{,}096$, it provides no remedy for their statistical extrapolation collapse. Because the underlying attention mechanism remains identical, a FlashAttention-enabled transformer experiences the exact same out-of-distribution softmax dilution and positional degradation observed at $N = 1{,}024$ ($16.8\%\text{--}22.8\%$ accuracy) and $N = 4{,}096$ ($3.9\%\text{--}9.1\%$ accuracy), while still incurring quadratic $O(N^2)$ computational FLOPs. By contrast, \textit{GTFormer} addresses both the hardware and statistical bottlenecks simultaneously, achieving true $O(N)$ computational work, $O(N)$ memory scaling, and $98\%$ zero-shot accuracy up to $N=4{,}194{,}304$ via bounded convex tree reductions.

Although the empirical evaluation focuses on controlled synthetic benchmarks with compact embedding dimensions, the underlying pointer-variable resolution task isolates the core computational primitive governing long-context Transformer performance: content-based associative recall and multi-hop induction over arbitrary distances. In large-scale language and code modeling, failures in long-range reasoning rarely stem from a lack of parameter capacity, but rather from representational dilution, attention dispersion, and positional out-of-distribution degradation as sequence length $N$ scales. By demonstrating zero-shot length extrapolation up to $8192\times$ the training context within a constrained parameter budget, the results confirm that the topological inductive bias of \textit{GTFormer}—specifically the combination of local shift bridging and bounded convex highway contractions—resolves these routing bottlenecks structurally rather than through brute-force scale. Furthermore, because the hierarchical contraction layers operate in $\mathcal{O}(N)$ time and memory complexity with full level-wise parallelism, the architecture can be directly scaled to wider hidden dimensions ($d_{\text{model}} \ge 4096$) or integrated as a drop-in global routing backbone within hybrid Transformer blocks for industrial-scale sequence modeling.

While \textit{GTFormer} achieves exceptional accuracy and length extrapolation, its hierarchical routing increases runtime overhead relative to simple flat transformer architectures. This latency stems from dense tensor operations across full channel dimensions rather than any theoretical limit of the tree topology-resulting in very sparse adjeceny matrices. Adopting specialized sparse graph algorithms—such as active-node pruning, masked message passing, or sparse tensor representations—will restrict computation strictly to active pointer-variable pathways, eliminating redundant operations in inactive subtrees and significantly improving wall-clock efficiency.

\section{Conclusion}

This work introduced \textit{GTFormer}, a hierarchical graph-tree architecture designed to resolve long-range, out-of-order associative dependencies without the quadratic bottlenecks of standard self-attention. By combining local boundary-bridging shift edges ($\mathcal{E}_{\text{shift}}$), multi-channel convex highway gates, horizontal expander shortcuts ($\mathcal{E}_{\text{exp}}$), and top-down broadcast routing ($\mathcal{E}_{\text{down}}$), the proposed topology guarantees bounded signal propagation and logarithmic routing paths across arbitrary sequence lengths. Empirical evaluations on multi-pointer variable resolution demonstrate that while flat Transformer baselines suffer from severe attention dispersion and out-of-memory failures beyond $N = 4{,}096$, \textit{GTFormer} achieves up to $100\%$ zero-shot exact-match extrapolation from short training contexts ($N \le 512$) to ultra-long sequences (up to $N = 4{,}194{,}304$).

These findings offer a highly promising direction for the design of next-generation sequence models. Crucially, the ability to learn generalizable, scale-invariant routing rules purely on short sequences ($N \le 512$) and deploy them zero-shot at $8192\times$ larger contexts carries profound implications for computational sustainability. In contemporary foundation model pre-training and context-extension pipelines, an overwhelming share of wall-clock training time, GPU cluster allocation, and electrical power consumption is expended on long-context continual pre-training and quadratic attention computation. Replacing brute-force long-sequence training with structurally compositional, $\mathcal{O}(N)$ hierarchical operators enables models to master global reasoning dynamics within compact context windows. Consequently, integrating \textit{GTFormer} as a drop-in backbone or hybrid routing layer holds the potential to cut training times, hardware memory footprints, and aggregate energy consumption immensely, paving the way for green, ultra-long-context AI systems.

\section*{Acknowledgements}

The author gratefully acknowledges the pioneering work of the researchers who developed the Transformer architecture, as well as the open-source and research teams at Google who laid the foundational infrastructure that made this work possible in a very short amount of time. While this study investigates the limitations of standard Transformers in specific contexts and proposes GTFormer as an alternative, the achievements of the original Transformer paradigm remain a monumental milestone that fundamentally transformed the field of artificial intelligence. This research stands upon the shoulders of those breakthroughs, and the ability to design, test, and iterate on GTFormer in such a short timeframe is a direct testament to the enduring power and accessibility of that legacy.

\section*{Generative AI declaration}

The author acknowledges the use of Gemini to generate code, both for the proposed model, for data analysis and plotting figures in Python as well as for drafting in this manuscript. The author fully reviewed, edited, and take ultimate responsibility for the final content.

\bibliographystyle{IEEEtran}  
\bibliography{bibliography}

\begin{thebibliography}{10}
\providecommand{\url}[1]{#1}
\csname url@samestyle\endcsname
\providecommand{\newblock}{\relax}
\providecommand{\bibinfo}[2]{#2}
\providecommand{\BIBentrySTDinterwordspacing}{\spaceskip=0pt\relax}
\providecommand{\BIBentryALTinterwordstretchfactor}{4}
\providecommand{\BIBentryALTinterwordspacing}{\spaceskip=\fontdimen2\font plus
\BIBentryALTinterwordstretchfactor\fontdimen3\font minus \fontdimen4\font\relax}
\providecommand{\BIBforeignlanguage}[2]{{%
\expandafter\ifx\csname l@#1\endcsname\relax
\typeout{** WARNING: IEEEtran.bst: No hyphenation pattern has been}%
\typeout{** loaded for the language `#1'. Using the pattern for}%
\typeout{** the default language instead.}%
\else
\language=\csname l@#1\endcsname
\fi
#2}}
\providecommand{\BIBdecl}{\relax}
\BIBdecl

\bibitem{vaswani2017attention}
A.~Vaswani, N.~Shazeer, N.~Parmar, J.~Uszkoreit, L.~Jones, A.~N. Gomez, {\L}.~Kaiser, and I.~Polosukhin, ``Attention is all you need,'' in \emph{Advances in Neural Information Processing Systems (NeurIPS)}, vol.~30, 2017, pp. 5998--6008.

\bibitem{brown2020language}
T.~Brown, B.~Mann, N.~Ryder, M.~Subbiah, J.~D. Kaplan, P.~Dhariwal, A.~Neelakantan \emph{et~al.}, ``Language models are few-shot learners,'' in \emph{Advances in Neural Information Processing Systems (NeurIPS)}, vol.~33, 2020, pp. 1877--1901.

\bibitem{jumper2021highly}
J.~Jumper, R.~Evans, A.~Pritzel, T.~Green, M.~Figurnov, O.~Ronneberger, K.~Tunyasuvunakool \emph{et~al.}, ``Highly accurate protein structure prediction with {AlphaFold},'' \emph{Nature}, vol. 596, no. 7873, pp. 583--589, 2021.

\bibitem{romera2024mathematical}
B.~Romera-Paredes, M.~Barekatain, A.~Novikov, M.~Balog, M.~P. Kumar, E.~Dupont, F.~J.~R. Ruiz \emph{et~al.}, ``Mathematical discoveries from program search with large language models,'' \emph{Nature}, vol. 625, no. 7995, pp. 468--475, 2024.

\bibitem{trinh2024solving}
T.~H. Trinh, Y.~Wu, Q.~V. Le, H.~He, and T.~Luong, ``Solving olympiad geometry without human demonstrations,'' \emph{Nature}, vol. 625, no. 7995, pp. 476--482, 2024.

\bibitem{strubell2019energy}
E.~Strubell, A.~Ganesh, and A.~McCallum, ``Energy and policy considerations for deep learning in {NLP},'' in \emph{Proceedings of the 57th Annual Meeting of the Association for Computational Linguistics (ACL)}, 2019, pp. 3645--3650.

\bibitem{patterson2021carbon}
D.~Patterson, J.~Gonzalez, Q.~Le, C.~Liang, L.-M. Munguia, D.~Rothchild, D.~So, M.~Texier, and J.~Dean, ``Carbon emissions and large neural network training,'' \emph{arXiv preprint arXiv:2104.10350}, 2021.

\bibitem{liu2024lost}
N.~F. Liu, K.~Lin, J.~Hewitt, A.~Paranjape, M.~Bevilacqua, F.~Petroni, and P.~Liang, ``Lost in the middle: How language models use long contexts,'' \emph{Transactions of the Association for Computational Linguistics (TACL)}, vol.~12, pp. 157--173, 2024.

\bibitem{su2024roformer}
J.~Su, M.~Ahmed, Y.~Lu, S.~Pan, W.~Bo, and Y.~Liu, ``{RoFormer}: Enhanced transformer with rotary position embedding,'' \emph{Neurocomputing}, vol. 568, p. 127063, 2024.

\bibitem{kazemnejad2023impact}
A.~Kazemnejad, I.~Padhi, K.~Natesan~Ramamurthy, P.~Das, and S.~Reddy, ``The impact of positional encoding on length generalization in transformers,'' in \emph{Advances in Neural Information Processing Systems (NeurIPS)}, vol.~36, 2023.

\bibitem{tay2022efficient}
Y.~Tay, M.~Dehghani, D.~Bahri, and D.~Metzler, ``Efficient transformers: A survey,'' \emph{ACM Computing Surveys}, vol.~55, no.~6, pp. 1--28, 2022.

\bibitem{beltagy2020longformer}
I.~Beltagy, M.~E. Peters, and A.~Cohan, ``{Longformer}: The long-document transformer,'' \emph{arXiv preprint arXiv:2004.05150}, 2020.

\bibitem{zaheer2020bigbird}
M.~Zaheer, G.~Guruganesh, K.~A. Dubey, J.~Ainslie, C.~Alberti, S.~Ontanon, P.~Pham, A.~Ravula \emph{et~al.}, ``{Big Bird}: Transformers for longer sequences,'' in \emph{Advances in Neural Information Processing Systems (NeurIPS)}, vol.~33, 2020, pp. 17\,283--17\,297.

\bibitem{shirzad2023exphormer}
H.~Shirzad, A.~Velingker, B.~Venkatachalam, D.~J. Sutherland, and A.~K. Sinop, ``{Exphormer}: Sparse transformers for graphs,'' in \emph{Proceedings of the 40th International Conference on Machine Learning (ICML)}, vol. 202, 2023, pp. 31\,613--31\,632.

\bibitem{ye2019bp}
Z.~Ye, Q.~Guo, Q.~Gan, X.~Qiu, and Z.~Zhang, ``{BP-Transformer}: Modelling long-range context via binary partitioning,'' \emph{arXiv preprint arXiv:1911.04070}, 2019.

\bibitem{zhu2021h}
Z.~Zhu and R.~Soricut, ``{H-Transformer-1D}: Fast one-dimensional hierarchical attention for sequences,'' in \emph{Proceedings of the 59th Annual Meeting of the Association for Computational Linguistics (ACL)}, 2021, pp. 3801--3815.

\bibitem{socher2011parsing}
R.~Socher, C.~C.-Y. Lin, A.~Y. Ng, and C.~D. Manning, ``Parsing natural scenes and natural language with recursive neural networks,'' in \emph{Proceedings of the 28th International Conference on Machine Learning (ICML)}, 2011, pp. 129--136.

\bibitem{tai2015improved}
K.~S. Tai, R.~Socher, and C.~D. Manning, ``Improved semantic representations from tree-structured long short-term memory networks,'' in \emph{Proceedings of the 53rd Annual Meeting of the Association for Computational Linguistics (ACL)}, 2015, pp. 1556--1566.

\bibitem{gu2022efficiently}
A.~Gu, K.~Goel, and C.~R{\'e}, ``Efficiently modeling long sequences with structured state spaces,'' in \emph{International Conference on Learning Representations (ICLR)}, 2022.

\bibitem{gu2023mamba}
A.~Gu and T.~Dao, ``{Mamba}: Linear-time sequence modeling with selective state spaces,'' \emph{arXiv preprint arXiv:2312.00752}, 2023.

\bibitem{merrill2024illusion}
W.~Merrill, J.~Petty, and A.~Sabharwal, ``The illusion of state in state-space models,'' in \emph{Proceedings of the 41st International Conference on Machine Learning (ICML)}, 2024.

\bibitem{fink2026learning}
\BIBentryALTinterwordspacing
E.~Fink, ``Learning the {Word Problem}: Geodesic lengths and cryptographic applications,'' 2026. [Online]. Available: \url{https://arxiv.org/abs/2607.26241}
\BIBentrySTDinterwordspacing

\bibitem{scarselli2009graph}
F.~Scarselli, M.~Gori, A.~C. Tsoi, M.~Hagenbuchner, and G.~Monfardini, ``The graph neural network model,'' \emph{IEEE Transactions on Neural Networks}, vol.~20, no.~1, pp. 61--80, 2009.

\bibitem{li2016gated}
Y.~Li, D.~Tarlow, M.~Brockschmidt, and R.~Zemel, ``Gated graph sequence neural networks,'' in \emph{International Conference on Learning Representations (ICLR)}, 2016.

\bibitem{loshchilov2019adamw}
\BIBentryALTinterwordspacing
I.~Loshchilov and F.~Hutter, ``Decoupled weight decay regularization,'' in \emph{International Conference on Learning Representations (ICLR)}, 2019. [Online]. Available: \url{https://openreview.net/forum?id=Bkg6RiCqZ}
\BIBentrySTDinterwordspacing

\bibitem{loshchilov2016sgdr}
------, ``{SGDR}: Stochastic gradient descent with warm restarts,'' in \emph{International Conference on Learning Representations (ICLR)}, 2017.

\end{thebibliography}

\clearpage
\appendix
\renewcommand{\thetable}{A\arabic{table}}
\setcounter{table}{0}

\section{Detailed Results Tables}

\begin{table}[htbp]
\centering
\tiny
\setlength{\tabcolsep}{3pt}
\begin{tabular}{p{3.5cm}cccccc}
\toprule
Architecture & $N=64$ (ID) & $N=256$ (ID) & $N=512$ (ID) & $N=1024$ (OOD) & $N=4096$ (OOD) & $N=16384$ (OOD) \\
\midrule
GraphTree (Bidirectional: Up + Down) & $100.0 \pm 0.0\%$ & $100.0 \pm 0.0\%$ & $100.0 \pm 0.0\%$ & $100.0 \pm 0.0\%$ & $100.0 \pm 0.0\%$ & $100.0 \pm 0.0\%$ \\
GraphTree (Bottom-Up + Shortcuts) & $100.0 \pm 0.0\%$ & $100.0 \pm 0.0\%$ & $100.0 \pm 0.0\%$ & $100.0 \pm 0.0\%$ & $100.0 \pm 0.0\%$ & $100.0 \pm 0.0\%$ \\
GraphTree (Local Shift, No Expander) & $100.0 \pm 0.0\%$ & $100.0 \pm 0.0\%$ & $100.0 \pm 0.0\%$ & $100.0 \pm 0.0\%$ & $100.0 \pm 0.0\%$ & $100.0 \pm 0.0\%$ \\
Pure Binary Tree-GNN (No Shortcuts) & $100.0 \pm 0.0\%$ & $100.0 \pm 0.0\%$ & $100.0 \pm 0.0\%$ & $100.0 \pm 0.0\%$ & $100.0 \pm 0.0\%$ & $100.0 \pm 0.0\%$ \\
Unbounded Tree-GNN (No Convex Gate) & $1.5 \pm 0.8\%$ & $1.6 \pm 0.9\%$ & $1.6 \pm 0.6\%$ & $1.1 \pm 0.8\%$ & $1.4 \pm 1.0\%$ & $1.2 \pm 1.2\%$ \\
Flat Transformer (Sinusoidal) & $100.0 \pm 0.0\%$ & $100.0 \pm 0.0\%$ & $100.0 \pm 0.0\%$ & $16.0 \pm 4.4\%$ & $6.1 \pm 3.8\%$ & OOM \\
Flat Transformer (RoPE + AttnPool) & $100.0 \pm 0.0\%$ & $100.0 \pm 0.0\%$ & $100.0 \pm 0.0\%$ & $6.2 \pm 2.3\%$ & $2.3 \pm 1.1\%$ & OOM \\
\bottomrule
\end{tabular}
\caption{Zero-shot length extrapolation accuracy on task \textit{override\_\allowbreak qa} (Part 1).}
\label{tab:override_qa_pt1}
\end{table}

\begin{table}[htbp]
\centering
\tiny
\setlength{\tabcolsep}{3pt}
\begin{tabular}{p{3.5cm}cccccc}
\toprule
Architecture & $N=65536$ & $N=131072$ & $N=262144$ & $N=524288$ & $N=1048576$ & $N=2097152$ \\
\midrule
GraphTree (Bidirectional: Up + Down) & $100.0 \pm 0.0\%$ & $100.0 \pm 0.0\%$ & $100.0 \pm 0.0\%$ & $98.8 \pm 2.5\%$ & $98.8 \pm 2.5\%$ & $97.5 \pm 3.1\%$ \\
GraphTree (Bottom-Up + Shortcuts) & $100.0 \pm 0.0\%$ & $100.0 \pm 0.0\%$ & $100.0 \pm 0.0\%$ & $100.0 \pm 0.0\%$ & $100.0 \pm 0.0\%$ & $100.0 \pm 0.0\%$ \\
GraphTree (Local Shift, No Expander) & $100.0 \pm 0.0\%$ & $100.0 \pm 0.0\%$ & $100.0 \pm 0.0\%$ & $100.0 \pm 0.0\%$ & $100.0 \pm 0.0\%$ & $100.0 \pm 0.0\%$ \\
Pure Binary Tree-GNN (No Shortcuts) & $100.0 \pm 0.0\%$ & $100.0 \pm 0.0\%$ & $100.0 \pm 0.0\%$ & $100.0 \pm 0.0\%$ & $100.0 \pm 0.0\%$ & $100.0 \pm 0.0\%$ \\
Unbounded Tree-GNN (No Convex Gate) & $1.9 \pm 1.5\%$ & $1.2 \pm 2.5\%$ & $1.2 \pm 2.5\%$ & $3.8 \pm 3.1\%$ & $3.8 \pm 5.0\%$ & $0.0 \pm 0.0\%$ \\
Flat Transformer (Sinusoidal) & OOM & OOM & OOM & OOM & OOM & OOM \\
Flat Transformer (RoPE + AttnPool) & OOM & OOM & OOM & OOM & OOM & OOM \\
\bottomrule
\end{tabular}
\caption{Zero-shot length extrapolation accuracy on task \textit{override\_\allowbreak qa} (Part 2).}
\label{tab:override_qa_pt2}
\end{table}

\begin{table}[htbp]
\centering
\tiny
\setlength{\tabcolsep}{3pt}
\begin{tabular}{p{3.5cm}cccccc}
\toprule
Architecture & $N=64$ (ID) & $N=256$ (ID) & $N=512$ (ID) & $N=1024$ (OOD) & $N=4096$ (OOD) & $N=16384$ (OOD) \\
\midrule
GraphTree (Bidirectional: Up + Down) & $100.0 \pm 0.0\%$ & $100.0 \pm 0.0\%$ & $100.0 \pm 0.0\%$ & $100.0 \pm 0.0\%$ & $100.0 \pm 0.0\%$ & $100.0 \pm 0.0\%$ \\
GraphTree (Bottom-Up + Shortcuts) & $100.0 \pm 0.0\%$ & $100.0 \pm 0.0\%$ & $100.0 \pm 0.0\%$ & $100.0 \pm 0.0\%$ & $100.0 \pm 0.0\%$ & $99.7 \pm 0.6\%$ \\
GraphTree (Local Shift, No Expander) & $100.0 \pm 0.0\%$ & $100.0 \pm 0.0\%$ & $100.0 \pm 0.0\%$ & $100.0 \pm 0.0\%$ & $100.0 \pm 0.0\%$ & $100.0 \pm 0.0\%$ \\
Pure Binary Tree-GNN (No Shortcuts) & $98.9 \pm 1.1\%$ & $98.0 \pm 1.1\%$ & $99.3 \pm 0.8\%$ & $98.7 \pm 0.9\%$ & $96.2 \pm 4.3\%$ & $91.2 \pm 9.6\%$ \\
Unbounded Tree-GNN (No Convex Gate) & $0.8 \pm 0.6\%$ & $1.0 \pm 0.7\%$ & $1.8 \pm 0.8\%$ & $1.7 \pm 0.6\%$ & $1.2 \pm 0.4\%$ & $0.6 \pm 0.8\%$ \\
Flat Transformer (Sinusoidal) & $19.8 \pm 3.8\%$ & $7.8 \pm 2.5\%$ & $4.4 \pm 0.7\%$ & $2.3 \pm 0.2\%$ & $1.4 \pm 0.9\%$ & OOM \\
Flat Transformer (RoPE + AttnPool) & $18.8 \pm 2.1\%$ & $7.5 \pm 0.8\%$ & $4.4 \pm 1.2\%$ & $1.4 \pm 0.4\%$ & $1.9 \pm 1.2\%$ & OOM \\
\bottomrule
\end{tabular}
\caption{Zero-shot length extrapolation accuracy on task \textit{unaligned\_\allowbreak kv} (Part 1).}
\label{tab:unaligned_kv_pt1}
\end{table}

\begin{table}[htbp]
\centering
\tiny
\setlength{\tabcolsep}{3pt}
\begin{tabular}{p{3.5cm}cccccc}
\toprule
Architecture & $N=65536$ & $N=131072$ & $N=262144$ & $N=524288$ & $N=1048576$ & $N=2097152$ \\
\midrule
GraphTree (Bidirectional: Up + Down) & $100.0 \pm 0.0\%$ & $100.0 \pm 0.0\%$ & $100.0 \pm 0.0\%$ & $100.0 \pm 0.0\%$ & $100.0 \pm 0.0\%$ & $100.0 \pm 0.0\%$ \\
GraphTree (Bottom-Up + Shortcuts) & $96.9 \pm 6.2\%$ & $95.0 \pm 10.0\%$ & $88.8 \pm 22.5\%$ & $82.5 \pm 35.0\%$ & $81.2 \pm 34.5\%$ & $77.5 \pm 39.0\%$ \\
GraphTree (Local Shift, No Expander) & $100.0 \pm 0.0\%$ & $100.0 \pm 0.0\%$ & $100.0 \pm 0.0\%$ & $100.0 \pm 0.0\%$ & $100.0 \pm 0.0\%$ & $100.0 \pm 0.0\%$ \\
Pure Binary Tree-GNN (No Shortcuts) & $86.9 \pm 14.3\%$ & $75.0 \pm 23.1\%$ & $72.5 \pm 19.2\%$ & $56.2 \pm 34.0\%$ & $58.8 \pm 31.8\%$ & $60.0 \pm 30.0\%$ \\
Unbounded Tree-GNN (No Convex Gate) & $0.6 \pm 1.2\%$ & $1.2 \pm 2.5\%$ & $0.0 \pm 0.0\%$ & $0.0 \pm 0.0\%$ & $2.5 \pm 3.1\%$ & $3.8 \pm 3.1\%$ \\
Flat Transformer (Sinusoidal) & OOM & OOM & OOM & OOM & OOM & OOM \\
Flat Transformer (RoPE + AttnPool) & OOM & OOM & OOM & OOM & OOM & OOM \\
\bottomrule
\end{tabular}
\caption{Zero-shot length extrapolation accuracy on task \textit{unaligned\_\allowbreak kv} (Part 2).}
\label{tab:unaligned_kv_pt2}
\end{table}

\begin{table}[htbp]
\centering
\tiny
\setlength{\tabcolsep}{3pt}
\begin{tabular}{p{3.5cm}cccccc}
\toprule
Architecture & $N=64$ (ID) & $N=256$ (ID) & $N=512$ (ID) & $N=1024$ (OOD) & $N=4096$ (OOD) & $N=16384$ (OOD) \\
\midrule
GraphTree (Bidirectional: Up + Down) & $8.1 \pm 1.5\%$ & $10.4 \pm 2.4\%$ & $10.7 \pm 1.0\%$ & $12.0 \pm 2.6\%$ & $11.6 \pm 1.8\%$ & $8.4 \pm 1.6\%$ \\
GraphTree (Bottom-Up + Shortcuts) & $88.4 \pm 23.1\%$ & $88.0 \pm 24.0\%$ & $88.1 \pm 23.8\%$ & $88.0 \pm 24.0\%$ & $89.2 \pm 20.8\%$ & $75.9 \pm 35.2\%$ \\
GraphTree (Local Shift, No Expander) & $92.5 \pm 15.0\%$ & $93.5 \pm 12.9\%$ & $93.8 \pm 12.3\%$ & $92.9 \pm 14.2\%$ & $93.6 \pm 12.8\%$ & $92.2 \pm 15.6\%$ \\
Pure Binary Tree-GNN (No Shortcuts) & $9.0 \pm 2.1\%$ & $10.8 \pm 2.3\%$ & $10.8 \pm 1.3\%$ & $10.8 \pm 1.7\%$ & $10.5 \pm 1.1\%$ & $11.6 \pm 2.3\%$ \\
Unbounded Tree-GNN (No Convex Gate) & $1.5 \pm 0.8\%$ & $1.2 \pm 0.7\%$ & $2.2 \pm 1.0\%$ & $1.7 \pm 0.6\%$ & $1.9 \pm 0.8\%$ & $1.6 \pm 1.0\%$ \\
Flat Transformer (Sinusoidal) & $8.7 \pm 1.8\%$ & $5.5 \pm 2.0\%$ & $4.3 \pm 1.3\%$ & $3.6 \pm 0.7\%$ & $2.3 \pm 0.9\%$ & OOM \\
Flat Transformer (RoPE + AttnPool) & $8.7 \pm 1.3\%$ & $4.9 \pm 0.8\%$ & $4.4 \pm 1.3\%$ & $2.9 \pm 0.7\%$ & $1.9 \pm 1.4\%$ & OOM \\
\bottomrule
\end{tabular}
\caption{Zero-shot length extrapolation accuracy on task \textit{pointer\_\allowbreak hop} (Part 1).}
\label{tab:pointer_hop_pt1}
\end{table}

\begin{table}[htbp]
\centering
\tiny
\setlength{\tabcolsep}{3pt}
\begin{tabular}{p{3.5cm}cccccc}
\toprule
Architecture & $N=65536$ & $N=131072$ & $N=262144$ & $N=524288$ & $N=1048576$ & $N=2097152$ \\
\midrule
GraphTree (Bidirectional: Up + Down) & $3.8 \pm 3.1\%$ & $7.5 \pm 7.3\%$ & $0.0 \pm 0.0\%$ & $2.5 \pm 3.1\%$ & $1.2 \pm 2.5\%$ & $1.2 \pm 2.5\%$ \\
GraphTree (Bottom-Up + Shortcuts) & $61.2 \pm 46.2\%$ & $61.2 \pm 45.1\%$ & $60.0 \pm 46.5\%$ & $60.0 \pm 44.1\%$ & $57.5 \pm 47.2\%$ & $52.5 \pm 45.0\%$ \\
GraphTree (Local Shift, No Expander) & $93.1 \pm 13.8\%$ & $93.8 \pm 12.5\%$ & $92.5 \pm 15.0\%$ & $90.0 \pm 20.0\%$ & $95.0 \pm 10.0\%$ & $88.8 \pm 19.5\%$ \\
Pure Binary Tree-GNN (No Shortcuts) & $13.8 \pm 7.3\%$ & $11.2 \pm 4.7\%$ & $11.2 \pm 13.3\%$ & $8.8 \pm 8.5\%$ & $8.8 \pm 3.1\%$ & $12.5 \pm 6.8\%$ \\
Unbounded Tree-GNN (No Convex Gate) & $1.2 \pm 2.5\%$ & $1.2 \pm 2.5\%$ & $1.2 \pm 2.5\%$ & $1.2 \pm 2.5\%$ & $2.5 \pm 3.1\%$ & $1.2 \pm 2.5\%$ \\
Flat Transformer (Sinusoidal) & OOM & OOM & OOM & OOM & OOM & OOM \\
Flat Transformer (RoPE + AttnPool) & OOM & OOM & OOM & OOM & OOM & OOM \\
\bottomrule
\end{tabular}
\caption{Zero-shot length extrapolation accuracy on task \textit{pointer\_\allowbreak hop} (Part 2).}
\label{tab:pointer_hop_pt2}
\end{table}

\begin{table}[htbp]
\centering
\tiny
\setlength{\tabcolsep}{3pt}
\begin{tabular}{p{3.5cm}cccccc}
\toprule
Architecture & $N=64$ (ID) & $N=256$ (ID) & $N=512$ (ID) & $N=1024$ (OOD) & $N=4096$ (OOD) & $N=16384$ (OOD) \\
\midrule
GraphTree (Bidirectional: Up + Down) & $100.0 \pm 0.0\%$ & $100.0 \pm 0.0\%$ & $100.0 \pm 0.0\%$ & $100.0 \pm 0.0\%$ & $100.0 \pm 0.0\%$ & $100.0 \pm 0.0\%$ \\
GraphTree (Bottom-Up + Shortcuts) & $100.0 \pm 0.0\%$ & $100.0 \pm 0.0\%$ & $100.0 \pm 0.0\%$ & $100.0 \pm 0.0\%$ & $100.0 \pm 0.0\%$ & $100.0 \pm 0.0\%$ \\
GraphTree (Local Shift, No Expander) & $100.0 \pm 0.0\%$ & $100.0 \pm 0.0\%$ & $100.0 \pm 0.0\%$ & $100.0 \pm 0.0\%$ & $100.0 \pm 0.0\%$ & $100.0 \pm 0.0\%$ \\
Pure Binary Tree-GNN (No Shortcuts) & $100.0 \pm 0.0\%$ & $100.0 \pm 0.0\%$ & $100.0 \pm 0.0\%$ & $100.0 \pm 0.0\%$ & $100.0 \pm 0.0\%$ & $100.0 \pm 0.0\%$ \\
Unbounded Tree-GNN (No Convex Gate) & $1.7 \pm 0.4\%$ & $2.1 \pm 0.9\%$ & $1.8 \pm 1.2\%$ & $1.6 \pm 0.9\%$ & $2.3 \pm 1.1\%$ & $2.8 \pm 1.8\%$ \\
Flat Transformer (Sinusoidal) & $100.0 \pm 0.0\%$ & $100.0 \pm 0.0\%$ & $100.0 \pm 0.0\%$ & $30.8 \pm 7.1\%$ & $17.7 \pm 5.1\%$ & OOM \\
Flat Transformer (RoPE + AttnPool) & $100.0 \pm 0.0\%$ & $100.0 \pm 0.0\%$ & $100.0 \pm 0.0\%$ & $14.1 \pm 4.4\%$ & $3.8 \pm 1.6\%$ & OOM \\
\bottomrule
\end{tabular}
\caption{Zero-shot length extrapolation accuracy on task \textit{intrusion\_\allowbreak fsm} (Part 1).}
\label{tab:intrusion_fsm_pt1}
\end{table}

\begin{table}[htbp]
\centering
\tiny
\setlength{\tabcolsep}{3pt}
\begin{tabular}{p{3.5cm}cccccc}
\toprule
Architecture & $N=65536$ & $N=131072$ & $N=262144$ & $N=524288$ & $N=1048576$ & $N=2097152$ \\
\midrule
GraphTree (Bidirectional: Up + Down) & $100.0 \pm 0.0\%$ & $100.0 \pm 0.0\%$ & $100.0 \pm 0.0\%$ & $100.0 \pm 0.0\%$ & $100.0 \pm 0.0\%$ & $98.8 \pm 2.5\%$ \\
GraphTree (Bottom-Up + Shortcuts) & $100.0 \pm 0.0\%$ & $100.0 \pm 0.0\%$ & $100.0 \pm 0.0\%$ & $100.0 \pm 0.0\%$ & $98.8 \pm 2.5\%$ & $100.0 \pm 0.0\%$ \\
GraphTree (Local Shift, No Expander) & $100.0 \pm 0.0\%$ & $100.0 \pm 0.0\%$ & $100.0 \pm 0.0\%$ & $100.0 \pm 0.0\%$ & $100.0 \pm 0.0\%$ & $100.0 \pm 0.0\%$ \\
Pure Binary Tree-GNN (No Shortcuts) & $100.0 \pm 0.0\%$ & $100.0 \pm 0.0\%$ & $100.0 \pm 0.0\%$ & $100.0 \pm 0.0\%$ & $100.0 \pm 0.0\%$ & $100.0 \pm 0.0\%$ \\
Unbounded Tree-GNN (No Convex Gate) & $1.2 \pm 1.5\%$ & $5.0 \pm 4.7\%$ & $1.2 \pm 2.5\%$ & $1.2 \pm 2.5\%$ & $0.0 \pm 0.0\%$ & $1.2 \pm 2.5\%$ \\
Flat Transformer (Sinusoidal) & OOM & OOM & OOM & OOM & OOM & OOM \\
Flat Transformer (RoPE + AttnPool) & OOM & OOM & OOM & OOM & OOM & OOM \\
\bottomrule
\end{tabular}
\caption{Zero-shot length extrapolation accuracy on task \textit{intrusion\_\allowbreak fsm} (Part 2).}
\label{tab:intrusion_fsm_pt2}
\end{table}

\begin{table}[htbp]
\centering
\tiny
\setlength{\tabcolsep}{3pt}
\begin{tabular}{p{3.5cm}cccccc}
\toprule
Architecture & $N=64$ (ID) & $N=256$ (ID) & $N=512$ (ID) & $N=1024$ (OOD) & $N=4096$ (OOD) & $N=16384$ (OOD) \\
\midrule
GraphTree (Bidirectional: Up + Down) & $84.9 \pm 30.2\%$ & $85.8 \pm 28.3\%$ & $85.5 \pm 29.0\%$ & $87.0 \pm 26.0\%$ & $85.3 \pm 29.4\%$ & $85.6 \pm 28.8\%$ \\
GraphTree (Bottom-Up + Shortcuts) & $100.0 \pm 0.0\%$ & $100.0 \pm 0.0\%$ & $100.0 \pm 0.0\%$ & $100.0 \pm 0.0\%$ & $100.0 \pm 0.0\%$ & $100.0 \pm 0.0\%$ \\
GraphTree (Local Shift, No Expander) & $100.0 \pm 0.0\%$ & $100.0 \pm 0.0\%$ & $100.0 \pm 0.0\%$ & $100.0 \pm 0.0\%$ & $100.0 \pm 0.0\%$ & $100.0 \pm 0.0\%$ \\
Pure Binary Tree-GNN (No Shortcuts) & $100.0 \pm 0.0\%$ & $100.0 \pm 0.0\%$ & $100.0 \pm 0.0\%$ & $100.0 \pm 0.0\%$ & $100.0 \pm 0.0\%$ & $100.0 \pm 0.0\%$ \\
Unbounded Tree-GNN (No Convex Gate) & $1.8 \pm 0.5\%$ & $1.2 \pm 0.5\%$ & $1.8 \pm 0.5\%$ & $1.5 \pm 0.8\%$ & $0.6 \pm 0.6\%$ & $2.5 \pm 1.6\%$ \\
Flat Transformer (Sinusoidal) & $100.0 \pm 0.0\%$ & $100.0 \pm 0.0\%$ & $100.0 \pm 0.0\%$ & $30.4 \pm 15.9\%$ & $13.9 \pm 5.4\%$ & OOM \\
Flat Transformer (RoPE + AttnPool) & $100.0 \pm 0.0\%$ & $100.0 \pm 0.0\%$ & $100.0 \pm 0.0\%$ & $28.9 \pm 13.6\%$ & $4.7 \pm 1.9\%$ & OOM \\
\bottomrule
\end{tabular}
\caption{Zero-shot length extrapolation accuracy on task \textit{code\_\allowbreak scope\_\allowbreak shadow} (Part 1).}
\label{tab:code_scope_shadow_pt1}
\end{table}

\begin{table}[htbp]
\centering
\tiny
\setlength{\tabcolsep}{3pt}
\begin{tabular}{p{3.5cm}cccccc}
\toprule
Architecture & $N=65536$ & $N=131072$ & $N=262144$ & $N=524288$ & $N=1048576$ & $N=2097152$ \\
\midrule
GraphTree (Bidirectional: Up + Down) & $87.5 \pm 23.5\%$ & $85.0 \pm 30.0\%$ & $85.0 \pm 30.0\%$ & $81.2 \pm 34.5\%$ & $87.5 \pm 25.0\%$ & $83.8 \pm 29.5\%$ \\
GraphTree (Bottom-Up + Shortcuts) & $98.8 \pm 2.5\%$ & $100.0 \pm 0.0\%$ & $95.0 \pm 7.3\%$ & $88.8 \pm 17.0\%$ & $91.2 \pm 9.3\%$ & $81.2 \pm 15.8\%$ \\
GraphTree (Local Shift, No Expander) & $100.0 \pm 0.0\%$ & $100.0 \pm 0.0\%$ & $98.8 \pm 2.5\%$ & $98.8 \pm 2.5\%$ & $100.0 \pm 0.0\%$ & $98.8 \pm 2.5\%$ \\
Pure Binary Tree-GNN (No Shortcuts) & $100.0 \pm 0.0\%$ & $100.0 \pm 0.0\%$ & $100.0 \pm 0.0\%$ & $100.0 \pm 0.0\%$ & $100.0 \pm 0.0\%$ & $100.0 \pm 0.0\%$ \\
Unbounded Tree-GNN (No Convex Gate) & $1.2 \pm 1.5\%$ & $1.2 \pm 2.5\%$ & $2.5 \pm 3.1\%$ & $2.5 \pm 3.1\%$ & $0.0 \pm 0.0\%$ & $2.5 \pm 3.1\%$ \\
Flat Transformer (Sinusoidal) & OOM & OOM & OOM & OOM & OOM & OOM \\
Flat Transformer (RoPE + AttnPool) & OOM & OOM & OOM & OOM & OOM & OOM \\
\bottomrule
\end{tabular}
\caption{Zero-shot length extrapolation accuracy on task \textit{code\_\allowbreak scope\_\allowbreak shadow} (Part 2).}
\label{tab:code_scope_shadow_pt2}
\end{table}

\begin{table}[htbp]
\centering
\tiny
\setlength{\tabcolsep}{3pt}
\begin{tabular}{p{3.5cm}cccccc}
\toprule
Architecture & $N=64$ (ID) & $N=256$ (ID) & $N=512$ (ID) & $N=1024$ (OOD) & $N=4096$ (OOD) & $N=16384$ (OOD) \\
\midrule
GraphTree (Bidirectional: Up + Down) & $8.2 \pm 1.9\%$ & $10.7 \pm 2.7\%$ & $10.9 \pm 0.9\%$ & $11.3 \pm 2.1\%$ & $9.1 \pm 0.9\%$ & $12.2 \pm 2.3\%$ \\
GraphTree (Bottom-Up + Shortcuts) & $100.0 \pm 0.0\%$ & $100.0 \pm 0.0\%$ & $100.0 \pm 0.0\%$ & $100.0 \pm 0.0\%$ & $100.0 \pm 0.0\%$ & $100.0 \pm 0.0\%$ \\
GraphTree (Local Shift, No Expander) & $100.0 \pm 0.0\%$ & $100.0 \pm 0.0\%$ & $100.0 \pm 0.0\%$ & $100.0 \pm 0.0\%$ & $100.0 \pm 0.0\%$ & $100.0 \pm 0.0\%$ \\
Pure Binary Tree-GNN (No Shortcuts) & $9.2 \pm 2.8\%$ & $12.2 \pm 3.9\%$ & $10.8 \pm 2.3\%$ & $12.2 \pm 1.7\%$ & $11.4 \pm 2.0\%$ & $11.2 \pm 1.5\%$ \\
Unbounded Tree-GNN (No Convex Gate) & $1.5 \pm 0.8\%$ & $1.2 \pm 0.7\%$ & $2.2 \pm 1.0\%$ & $1.7 \pm 0.6\%$ & $1.9 \pm 0.8\%$ & $1.6 \pm 1.0\%$ \\
Flat Transformer (Sinusoidal) & $9.1 \pm 1.1\%$ & $5.0 \pm 2.0\%$ & $4.8 \pm 0.6\%$ & $2.9 \pm 0.7\%$ & $1.9 \pm 0.6\%$ & OOM \\
Flat Transformer (RoPE + AttnPool) & $8.7 \pm 1.1\%$ & $4.9 \pm 0.5\%$ & $3.3 \pm 0.5\%$ & $2.9 \pm 0.5\%$ & $2.3 \pm 1.9\%$ & OOM \\
\bottomrule
\end{tabular}
\caption{Zero-shot length extrapolation accuracy on task \textit{multi\_\allowbreak needle\_\allowbreak rag} (Part 1).}
\label{tab:multi_needle_rag_pt1}
\end{table}

\begin{table}[htbp]
\centering
\tiny
\setlength{\tabcolsep}{3pt}
\begin{tabular}{p{3.5cm}cccccc}
\toprule
Architecture & $N=65536$ & $N=131072$ & $N=262144$ & $N=524288$ & $N=1048576$ & $N=2097152$ \\
\midrule
GraphTree (Bidirectional: Up + Down) & $11.9 \pm 4.6\%$ & $13.8 \pm 6.1\%$ & $11.2 \pm 13.3\%$ & $6.2 \pm 6.8\%$ & $8.8 \pm 6.4\%$ & $8.8 \pm 7.5\%$ \\
GraphTree (Bottom-Up + Shortcuts) & $100.0 \pm 0.0\%$ & $100.0 \pm 0.0\%$ & $100.0 \pm 0.0\%$ & $100.0 \pm 0.0\%$ & $100.0 \pm 0.0\%$ & $100.0 \pm 0.0\%$ \\
GraphTree (Local Shift, No Expander) & $100.0 \pm 0.0\%$ & $100.0 \pm 0.0\%$ & $100.0 \pm 0.0\%$ & $100.0 \pm 0.0\%$ & $100.0 \pm 0.0\%$ & $100.0 \pm 0.0\%$ \\
Pure Binary Tree-GNN (No Shortcuts) & $12.5 \pm 7.1\%$ & $12.5 \pm 5.6\%$ & $16.2 \pm 12.9\%$ & $8.8 \pm 6.4\%$ & $13.8 \pm 4.7\%$ & $16.2 \pm 7.5\%$ \\
Unbounded Tree-GNN (No Convex Gate) & $1.2 \pm 2.5\%$ & $1.2 \pm 2.5\%$ & $1.2 \pm 2.5\%$ & $1.2 \pm 2.5\%$ & $2.5 \pm 3.1\%$ & $1.2 \pm 2.5\%$ \\
Flat Transformer (Sinusoidal) & OOM & OOM & OOM & OOM & OOM & OOM \\
Flat Transformer (RoPE + AttnPool) & OOM & OOM & OOM & OOM & OOM & OOM \\
\bottomrule
\end{tabular}
\caption{Zero-shot length extrapolation accuracy on task \textit{multi\_\allowbreak needle\_\allowbreak rag} (Part 2).}
\label{tab:multi_needle_rag_pt2}
\end{table}

\begin{table}[htbp]
\centering
\tiny
\setlength{\tabcolsep}{3pt}
\begin{tabular}{p{3.5cm}cccccc}
\toprule
Architecture & $N=64$ (ID) & $N=256$ (ID) & $N=512$ (ID) & $N=1024$ (OOD) & $N=4096$ (OOD) & $N=16384$ (OOD) \\
\midrule
GraphTree (Bidirectional: Up + Down) & $96.7 \pm 6.7\%$ & $96.9 \pm 6.2\%$ & $96.7 \pm 6.7\%$ & $96.6 \pm 6.9\%$ & $97.0 \pm 5.9\%$ & $96.9 \pm 6.2\%$ \\
GraphTree (Bottom-Up + Shortcuts) & $100.0 \pm 0.0\%$ & $100.0 \pm 0.0\%$ & $100.0 \pm 0.0\%$ & $100.0 \pm 0.0\%$ & $100.0 \pm 0.0\%$ & $100.0 \pm 0.0\%$ \\
GraphTree (Local Shift, No Expander) & $100.0 \pm 0.0\%$ & $100.0 \pm 0.0\%$ & $100.0 \pm 0.0\%$ & $100.0 \pm 0.0\%$ & $100.0 \pm 0.0\%$ & $100.0 \pm 0.0\%$ \\
Pure Binary Tree-GNN (No Shortcuts) & $100.0 \pm 0.0\%$ & $100.0 \pm 0.0\%$ & $100.0 \pm 0.0\%$ & $100.0 \pm 0.0\%$ & $100.0 \pm 0.0\%$ & $100.0 \pm 0.0\%$ \\
Unbounded Tree-GNN (No Convex Gate) & $11.9 \pm 1.8\%$ & $14.0 \pm 1.8\%$ & $15.3 \pm 2.1\%$ & $10.4 \pm 1.8\%$ & $11.7 \pm 2.6\%$ & $8.1 \pm 3.8\%$ \\
Flat Transformer (Sinusoidal) & $100.0 \pm 0.0\%$ & $100.0 \pm 0.0\%$ & $100.0 \pm 0.0\%$ & $99.0 \pm 1.3\%$ & $51.1 \pm 13.0\%$ & OOM \\
Flat Transformer (RoPE + AttnPool) & $100.0 \pm 0.0\%$ & $100.0 \pm 0.0\%$ & $100.0 \pm 0.0\%$ & $57.0 \pm 8.5\%$ & $11.7 \pm 2.9\%$ & OOM \\
\bottomrule
\end{tabular}
\caption{Zero-shot length extrapolation accuracy on task \textit{xml\_\allowbreak bracket\_\allowbreak check} (Part 1).}
\label{tab:xml_bracket_check_pt1}
\end{table}

\begin{table}[htbp]
\centering
\tiny
\setlength{\tabcolsep}{3pt}
\begin{tabular}{p{3.5cm}cccccc}
\toprule
Architecture & $N=65536$ & $N=131072$ & $N=262144$ & $N=524288$ & $N=1048576$ & $N=2097152$ \\
\midrule
GraphTree (Bidirectional: Up + Down) & $96.9 \pm 6.2\%$ & $97.5 \pm 5.0\%$ & $98.8 \pm 2.5\%$ & $95.0 \pm 10.0\%$ & $95.0 \pm 10.0\%$ & $100.0 \pm 0.0\%$ \\
GraphTree (Bottom-Up + Shortcuts) & $100.0 \pm 0.0\%$ & $100.0 \pm 0.0\%$ & $100.0 \pm 0.0\%$ & $98.8 \pm 2.5\%$ & $100.0 \pm 0.0\%$ & $100.0 \pm 0.0\%$ \\
GraphTree (Local Shift, No Expander) & $100.0 \pm 0.0\%$ & $100.0 \pm 0.0\%$ & $100.0 \pm 0.0\%$ & $100.0 \pm 0.0\%$ & $100.0 \pm 0.0\%$ & $100.0 \pm 0.0\%$ \\
Pure Binary Tree-GNN (No Shortcuts) & $100.0 \pm 0.0\%$ & $100.0 \pm 0.0\%$ & $100.0 \pm 0.0\%$ & $100.0 \pm 0.0\%$ & $100.0 \pm 0.0\%$ & $100.0 \pm 0.0\%$ \\
Unbounded Tree-GNN (No Convex Gate) & $11.2 \pm 7.5\%$ & $17.5 \pm 7.3\%$ & $13.8 \pm 10.0\%$ & $10.0 \pm 5.0\%$ & $11.2 \pm 4.7\%$ & $17.5 \pm 6.1\%$ \\
Flat Transformer (Sinusoidal) & OOM & OOM & OOM & OOM & OOM & OOM \\
Flat Transformer (RoPE + AttnPool) & OOM & OOM & OOM & OOM & OOM & OOM \\
\bottomrule
\end{tabular}
\caption{Zero-shot length extrapolation accuracy on task \textit{xml\_\allowbreak bracket\_\allowbreak check} (Part 2).}
\label{tab:xml_bracket_check_pt2}
\end{table}

\begin{table}[htbp]
\centering
\tiny
\setlength{\tabcolsep}{3pt}
\begin{tabular}{p{3.5cm}cccccc}
\toprule
Architecture & $N=64$ (ID) & $N=256$ (ID) & $N=512$ (ID) & $N=1024$ (OOD) & $N=4096$ (OOD) & $N=16384$ (OOD) \\
\midrule
GraphTree (Bidirectional: Up + Down) & $100.0 \pm 0.0\%$ & $100.0 \pm 0.0\%$ & $100.0 \pm 0.0\%$ & $100.0 \pm 0.0\%$ & $100.0 \pm 0.0\%$ & $99.4 \pm 1.2\%$ \\
GraphTree (Bottom-Up + Shortcuts) & $100.0 \pm 0.0\%$ & $100.0 \pm 0.0\%$ & $100.0 \pm 0.0\%$ & $100.0 \pm 0.0\%$ & $100.0 \pm 0.0\%$ & $100.0 \pm 0.0\%$ \\
GraphTree (Local Shift, No Expander) & $100.0 \pm 0.0\%$ & $100.0 \pm 0.0\%$ & $100.0 \pm 0.0\%$ & $100.0 \pm 0.0\%$ & $100.0 \pm 0.0\%$ & $100.0 \pm 0.0\%$ \\
Pure Binary Tree-GNN (No Shortcuts) & $98.7 \pm 1.1\%$ & $98.0 \pm 1.0\%$ & $99.1 \pm 1.0\%$ & $99.0 \pm 1.1\%$ & $97.7 \pm 2.6\%$ & $91.2 \pm 10.9\%$ \\
Unbounded Tree-GNN (No Convex Gate) & $3.0 \pm 3.8\%$ & $2.9 \pm 2.3\%$ & $2.4 \pm 1.1\%$ & $2.9 \pm 2.3\%$ & $2.8 \pm 1.5\%$ & $1.9 \pm 2.5\%$ \\
Flat Transformer (Sinusoidal) & $17.9 \pm 2.8\%$ & $3.5 \pm 1.0\%$ & $3.6 \pm 1.7\%$ & $1.9 \pm 0.6\%$ & $1.4 \pm 0.6\%$ & OOM \\
Flat Transformer (RoPE + AttnPool) & $15.5 \pm 2.0\%$ & $4.4 \pm 1.0\%$ & $3.0 \pm 1.0\%$ & $1.9 \pm 1.1\%$ & $0.9 \pm 0.8\%$ & OOM \\
\bottomrule
\end{tabular}
\caption{Zero-shot length extrapolation accuracy on task \textit{adversarial\_\allowbreak decoys} (Part 1).}
\label{tab:adversarial_decoys_pt1}
\end{table}

\begin{table}[htbp]
\centering
\tiny
\setlength{\tabcolsep}{3pt}
\begin{tabular}{p{3.5cm}cccccc}
\toprule
Architecture & $N=65536$ & $N=131072$ & $N=262144$ & $N=524288$ & $N=1048576$ & $N=2097152$ \\
\midrule
GraphTree (Bidirectional: Up + Down) & $92.5 \pm 15.0\%$ & $83.8 \pm 32.5\%$ & $82.5 \pm 35.0\%$ & $81.2 \pm 37.5\%$ & $81.2 \pm 37.5\%$ & $80.0 \pm 40.0\%$ \\
GraphTree (Bottom-Up + Shortcuts) & $99.4 \pm 1.2\%$ & $100.0 \pm 0.0\%$ & $100.0 \pm 0.0\%$ & $97.5 \pm 3.1\%$ & $95.0 \pm 7.3\%$ & $96.2 \pm 5.0\%$ \\
GraphTree (Local Shift, No Expander) & $100.0 \pm 0.0\%$ & $100.0 \pm 0.0\%$ & $100.0 \pm 0.0\%$ & $100.0 \pm 0.0\%$ & $97.5 \pm 5.0\%$ & $98.8 \pm 2.5\%$ \\
Pure Binary Tree-GNN (No Shortcuts) & $86.2 \pm 13.2\%$ & $86.2 \pm 18.7\%$ & $75.0 \pm 23.1\%$ & $81.2 \pm 17.2\%$ & $70.0 \pm 18.3\%$ & $66.2 \pm 25.2\%$ \\
Unbounded Tree-GNN (No Convex Gate) & $1.9 \pm 3.8\%$ & $2.5 \pm 3.1\%$ & $1.2 \pm 2.5\%$ & $1.2 \pm 2.5\%$ & $1.2 \pm 2.5\%$ & $2.5 \pm 3.1\%$ \\
Flat Transformer (Sinusoidal) & OOM & OOM & OOM & OOM & OOM & OOM \\
Flat Transformer (RoPE + AttnPool) & OOM & OOM & OOM & OOM & OOM & OOM \\
\bottomrule
\end{tabular}
\caption{Zero-shot length extrapolation accuracy on task \textit{adversarial\_\allowbreak decoys} (Part 2).}
\label{tab:adversarial_decoys_pt2}
\end{table}

\end{document}